%% file: main.tex
\documentclass{article}

\usepackage{microtype}
\usepackage{graphicx}
\usepackage{booktabs} 

\usepackage{hyperref}

\usepackage[preprint]{icml2026}

\usepackage{amsmath}
\usepackage{amssymb}
\usepackage{mathtools}
\usepackage{amsthm}
\usepackage{subfig}
\usepackage{verbatim,float,url,enumerate}
\usepackage{psfrag}
\usepackage{xcolor,dsfont}
\usepackage{cancel}
\usepackage[abbreviations]{foreign}
\usepackage{subcaption}
\usepackage{array}
\usepackage{multicol,multirow}
\usepackage{tikz}
\usepackage{pgfplots}
\usepackage{soul}
\usepackage{longtable}
\usepackage{ltablex}
\usepackage{listings}
\usepackage{dsfont}
\usepackage{color}
\input{math_commands.tex}
\usepackage{makecell}
\usepackage{pifont}
\usepackage{enumitem}

\usepackage[capitalize,noabbrev]{cleveref}

\usepackage[textsize=tiny]{todonotes}

\icmltitlerunning{Matching $f$-divergences for Fairness in Generative Models}

\begin{document}

\twocolumn[
  \icmltitle{Equalized Generative Treatment: Matching $f$-divergences \\ for Fairness in Generative Models}



  \icmlsetsymbol{equal}{*}

  \begin{icmlauthorlist}
    \icmlauthor{Alexandre V\'erine}{yyy}
    \icmlauthor{Rafael Pinot}{comp}
    \icmlauthor{Florian Le Bronnec}{xxx}
  \end{icmlauthorlist}

  \icmlaffiliation{yyy}{DI ENS, École normale supérieure, Université PSL, CNRS, Paris, France}
  \icmlaffiliation{comp}{Sorbonne Universit\'e, Universit\'e Paris Cit\'e, CNRS, Laboratoire de Probabilit\'es, Statistique et Mod\'elisation, LPSM, Paris, France}
  \icmlaffiliation{xxx}{Miles, LAMSADE, Universit\'e  Paris-Dauphine-PSL, CNRS, Paris, France}

  \icmlcorrespondingauthor{Alexandre V\'erine}{alexandre.verine@ens.fr}

  \icmlkeywords{Machine Learning, ICML}

  \vskip 0.3in
]



\printAffiliationsAndNotice{}  
\begin{abstract}
  Fairness is a crucial concern for generative models, which not only reflect but can also amplify societal and cultural biases. Existing fairness notions for generative models are largely adapted from classification and focus on balancing the probability of generating samples from each sensitive group. We show that such criteria are brittle, as they can be met even when different sensitive groups are modeled with widely varying quality. To address this limitation, we introduce a new fairness definition for generative models, termed as \emph{equalized generative treatment} (EGT), which requires comparable generation quality across all sensitive groups, with quality measured via a reference $f$-divergence. We further analyze the trade-offs induced by EGT, demonstrating that enforcing fairness constraints necessarily couples the overall model quality to that of the most challenging group to approximate. This indicates that a simple yet efficient min–max fine-tuning method should be able to balance $f$-divergences across sensitive groups to satisfy EGT. We validate this theoretical insight through a set of experiments on both image and text generation tasks. We demonstrate that min-max methods consistently achieve fairer outcomes compared to other approaches from the literature, while maintaining competitive overall performance for both tasks.
\end{abstract}

\input{content/intro}

\input{content/fdivergences}
\input{content/brittleness}
\input{content/equalized}
\input{content/experiments}

\input{content/conclusion}

\section*{Acknowledgments}
 This work was granted access to the HPC resources of IDRIS under the allocations 2025-AD011016159R1 and 2025-A0181016159  made by GENCI.

\input{content/appendix/statement}

\bibliography{icml2026_conference,references}
\bibliographystyle{icml2026}

\newpage
\appendix
\onecolumn

\input{content/appendix/mathsupp}

\input{content/appendix/experiment}

\end{document}

%% file: math_commands.tex
\usepackage{amsmath,amsfonts,bm}
\usepackage{amsthm}

\theoremstyle{plain}
\newtheorem{theorem}{Theorem}[section]
\newtheorem{lemma}[theorem]{Lemma}

\theoremstyle{definition}
\newtheorem{definition}[theorem]{Definition}

\theoremstyle{remark}

\newcommand{\Supp}{\textsc{Supp}}

\def\eqref#1{equation~\ref{#1}}

\def\1{\bm{1}}

\DeclareMathAlphabet{\mathsfit}{\encodingdefault}{\sfdefault}{m}{sl}
\SetMathAlphabet{\mathsfit}{bold}{\encodingdefault}{\sfdefault}{bx}{n}

\newcommand{\E}{\mathbb{E}}

\newcommand{\R}{\mathbb{R}}

\newcommand{\PX}{\mathcal{P}_\lambda \left( \mathcal{X} \right)}

\newcommand{\Df}{\mathcal{D}_f}
\newcommand{\Q}{\mathcal{Q}}

\DeclareMathOperator*{\argmax}{arg\,max}
\DeclareMathOperator*{\argmin}{arg\,min}

\newcommand{\cmark}{\ding{51}}%
\newcommand{\xmark}{\ding{55}}%

%% file: content/intro.tex
\section{Introduction}
\label{sec:intro}

The rise of generative models has revolutionized a wide range of domains, including natural language processing, computer vision, and scientific discovery~\cite{hu_simulating_2025}. 
As these systems become ever more ubiquitous, ensuring their \emph{trustworthiness} has emerged as a critical challenge~\citep{Kucharavy2024}. Trustworthiness is inherently multidimensional, encompassing many challenges such as robustness to perturbations~\citep{carlini2024aligned}, protection of user privacy~\citep{carlini_extracting_nodate,carlini_quantifying_2023}, and fairness \citep{choi_fair_2020,zameshina_fairness_2022}. Among them, concerns about fairness are particularly pressing, as generative models not only reflect but also shape the ways information and cultural artifacts are created, distributed, and consumed. Left unchecked, these systems risk amplifying and existing societal biases and stereotypes. Despite growing interest, most approaches to fairness in generative models fail to provide definitions that reflect the unique characteristics of generative modeling. Existing metrics are often adapted directly from the literature on fair classification, focusing on recalibrating generative odds (i.e., the probability of being generated) across sensitive groups~\citep{choi_fair_2020,zameshina_fairness_2022,teo_fair_2022,teo_measuring_2023}. However, such criteria fall short in capturing nuanced disparities on how different subpopulations are modeled.

This apparent lack of formalization in addressing fairness issues may stem from the broader challenge of assessing the quality of generative models, which remains an open problem. At its core, generative modeling seeks to approximate a target data distribution using a (parametrized) model distribution. A common strategy involves minimizing a statistical $f$-divergence, where the choice of $f$ defines the objective and, consequently, the properties of the trained model \cite{goodfellow_generative_2014,nowozin_f-gan_2016,grover_flow-gan_2018}.
At evaluation, however, the process becomes more complex due to the coexistence of a multitude of competing metrics. While $f$-divergences can off course  be employed for evaluation (e.g., for computing the precision and recall of the model \cite{verine_precision-recall_2023}) other metrics such as the FID \cite{heusel_gans_2017}, MAUVE \cite{pillutla_mauve_2023}, or self-BLEU \cite{zhu_texygen_2018}), are also frequently used in practice. These alternatives, however, often represent ad hoc choices with limited or no theoretical grounding, offering little insight to analysts.
To establish a clear and general framework for analyzing fairness in generative models, we adopt $f$-divergences. This choice ensures consistency with existing training objectives while providing a principled foundation for our fairness analysis, as detailed further in Section~\ref{def:fdiv}. In this context, our contributions are as follows.

\textbf{Brittleness of existing fairness criterion.} We first show in Theorem~\ref{th:failure} that existing notions of fairness can yield models that, while well-calibrated in terms of generative odds, can be arbitrarily unbalanced in terms of $f$-divergence between the target and trained conditional distribution for each sensitive group. This means that even when satisfying these definitions, models can generate some groups with much higher quality and diversity than some others. This outcome may not seem surprising, as existing methods were not designed to address this issue directly. Nevertheless, it underscores how the current lack of formalization can result in approaches that provide a false sense of fairness and provide a formal proof of the brittleness of existing definitions. We validate this theoretical result with numerical experiments for image generation on the FFHQ dataset~\citep{karras_analyzing_2020} or for text on the Wikipedia Biographies dataset~\citep{bronnec_exploring_2024}. In each case, satisfying existing definitions do not prevent the model from treating each sensitive group differently in terms of generation quality.

\textbf{A new definition matching $f$-divergences.} Based on this observation, we introduce a new definition of fairness called \emph{equalized generative treatment} (EGT), which relies on simultaneously controlling the $f$-divergence between the target and trained distributions for each sensitive group. In Theorem~\ref{th:lowerbound}, we demonstrate that applying this definition promotes the minimization of the highest $f$-divergence among sensitive groups. This naturally indicates that min-max fine-tuning methods are good candidates to target our new fairness criterion. We compare these methods to existing fairness solutions in the literature, evaluating performance on image generation using the FFHQ dataset \cite{karras_analyzing_2020} and on text generation using the Wikipedia Biographies dataset \cite{bronnec_exploring_2024}. While some methods may coincidentally improve EGT for specific model/task combinations, min-max schemes consistently outperform or matches prior work on this criterion across all settings we consider. In short, we show that min-max fine-tuning methods provide a stable and theoretically grounded solution to the fairness problem in generative models.

%% file: content/fdivergences.tex
\section{Background \& Related Works}

Let $\mathcal{X} \subset \R^d$, endowed with the euclidean norm $\Vert \cdot \Vert$. We denote $\PX$ the set of probability distribution on $\mathcal{X}$ that are absolutely continuous w.r.t. the Lebesgue measure $\lambda$. For any $P \in \PX$, its probability density function (pdf) is denoted by $p = \tfrac{dP}{d\lambda}$. Finally, we denote $\Delta(\mathcal{A})$ the probability simplex over a finite set $\mathcal{A}$, and $\mathds{1}_E(x)$ the indicator function of the event $x \in E$, for any $E \subset \mathcal{X}$.

\subsection{$f$-divergences in generative modeling}

Learning a generative model can be formalized as approximating a target distribution $P \in \PX$ with a model distribution $Q$, where $Q$ belongs to an admissible family
$\Q \subset \PX$. 
To measure the discrepancy between $P$ and $Q$, we use the
general class of $f$-divergences, defined below. \vspace{2pt}

\begin{definition}
    \label{def:fdiv}
    Let $f: (0, +\infty) \to ( - \infty, + \infty ]$ be a convex, lower semi-continuous
    function with $f(1) = 0$. For any $P, Q \in \PX$, the $f$-divergence between the distributions $P$
    and $Q$ is defined as
    $$
        \footnotesize
        \mathcal{D}_f(P\|Q) =  \int_{\Supp(Q)} f\!\left(\tfrac{p(x)}{q(x)}\right) q(x)\, d\lambda(x)
        + \zeta(\Supp(Q)),
    $$
    where $p$ and $q$ are the pdfs of $P$ and $Q$,
    $\Supp(Q) = \{ x \in \mathcal{X} \mid q(x) > 0\}$ denotes the support of $Q$ , $\zeta(\Supp(Q)) = \bar{f}(\infty)\, P \left( \mathcal{X} \backslash \Supp(Q) \right)$, and
    $\bar{f}(\infty) = \lim_{t \to +\infty} f(t)/t$. Furthermore, this definition adopts the conventions  $f(0) = \lim_{t \to 0^+} f(t)$ and $0 \times \infty = 0$ to avoid the above being ill-defined when $\bar{f}(\infty) = \infty$  and $P \ \left(\mathcal{X} \setminus \Supp(Q) \right) = 0$.
\end{definition}

The choice of $f$ in Definition~\ref{def:fdiv} specifies the divergence we aim to compute. For instance, the Kullback-Leibler divergence can be obtained by setting $f(t) = t \log t$. This divergence is usually minimized by likelihood-based methods used in LLMs \cite{grattafiori_llama_2024}, in Normalizing Flows \citep{rezende_variational_2016} or in some score-based diffusion models \citep{song_maximum_2021}. On the other hand, Generative Adversarial Networks \citep{goodfellow_generative_2014} typically optimize objective functions related to the Jensen-Shannon divergence, defined for $f(t) = t \log t - (t+1) \log (t+1)/2$. Additionally, several methods have also been proposed to minimize the Total Variation in GANs \citep{um_fair_2021} or LLMs \citep{ji_tailoring_2023}. More generally, recent work has proposed modular frameworks that allow targeting a variety of $f$-divergences, enabling practitioners to tailor the objective to the specific context and application image modeling~\citep{nowozin_f-gan_2016,grover_flow-gan_2018,cai_utilizing_2020,verine_precision-recall_2023,xu_one-step_2025} and more recently for LLMs \citep{wang_beyond_2023,sun_inverse-rlignment_2024,go_aligning_2023,verine_improving_2025}.



%




\medskip
\textbf{Evaluation.} In practice, there exists many metrics to evaluate the performance of a generative model, each with their own strengths and weaknesses~\citep{borji_pros_2022}. 
Among these metrics, a prominent example is the family of metrics that separately measure quality and diversity, most notably \emph{precision} and \emph{recall} for generative models \cite{kynkaanniemi_improved_2019} refined by \citet{kim_toppr_2023}. Given two distributions $P$ and $Q$, these metrics are defined as
\begin{itemize}
    \item $\mathrm{Precision}(Q \| P) = Q \ \!\bigl( \Supp(P) \bigr),$
    \item $\mathrm{Recall}(Q \| P) = P \ \!\bigl( \Supp(Q) \bigr).$
\end{itemize}

Precision and recall can be interpreted as a particular instance of $f$-divergences as demonstrated by~\cite{simon_revisiting_2019,verine_quality_2024}. Furthermore, extensions and generalizations of these metrics building on $f$-divergences have been investigated in several recent works~\citep{sajjadi_assessing_2018,djolonga_precision-recall_2020,pillutla_mauve_2023}. Furthermore, these two metrics are often combined within a $\mathrm{Precision/Recall}$ divergence which is simply defined as the sum of the $\mathrm{Precision}(Q \| P) + \mathrm{Recall}(Q \| P)$. Note that, by additivity of $f$-divergences, $\mathrm{Precision/Recall}$ is also a $f$-divergence (as a sum of $f$-divergences).


\subsection{Fairness in Generative Modeling}
\label{sec:fairness}

In the context of fairness, we extend this problem formulation by introducing a set of sensitive attributes $\mathcal{A}$ together with an oracle function $\psi : \mathcal{X} \to \mathcal{A}$ that maps each instance $x \in \mathcal{X}$ to its corresponding sensitive attribute. We assume $\psi$ is defined over the entire space, i.e., $\text{dom}(\psi) = \mathcal{X}$. This induces a partition of $\mathcal{X}$ into disjoint subsets $\mathcal{X}_a = \{ x \in \mathcal{X} \mid \psi(x) = a \}$ for each $a \in \mathcal{A}$. For any distribution $P \in \PX$ with pdf $p$, we can then express $P$ as the mixture
\[
    P = \sum_{a \in \mathcal{A}} \pi_a^P P_a \quad \mbox{with} \quad p_a(x) = \frac{p(x)}{\pi_a^P}\,\mathds{1}_{\mathcal{X}_a}(x) \quad \forall x \in \mathcal{X},
\]
where $\pi_a^P = P(\mathcal{X}_a)$ denotes the proportion of the population associated with attribute $a$, and $P_a$ is the conditional distribution restricted to $\mathcal{X}_a$. Thus, the vector $(\pi_a^P)_{a\in\mathcal{A}}$ lies in the simplex $\Delta(\mathcal{A})$, and $P$ can be interpreted as a mixture of attribute-conditional distributions. We call a distribution $P \in \PX$ non-trivial if $\pi_a^P > 0$ for all $a \in \mathcal{A}$. In the following, we will always assume the target distribution we consider is non-trivial, since fairness considerations would otherwise be meaningless.

\textbf{Existing fairness criteria.}
Most existing approaches to fairness in generative modeling focus on the proportions of sensitive attributes in the generated distribution. This perspective was popularized by \citet{hutchinson_50_2019}, who characterized fairness as the equal representation of a chosen sensitive attribute among generated samples. To the best of our knowledge, no standard terminology has been established in the literature for such proportion-based criteria. We therefore introduce the following nomenclature: \emph{equalized generative odds} (EGO), and \emph{matching generative odds} (MGO). The distinction between these two notions reflects different goals: EGO enforces uniform representation across groups, whereas MGO requires the generative model $Q$ to reproduce the proportions of the target distribution $P$. \medskip



\begin{definition}
    \label{def:GO}
    Let $P, Q \in \PX$ and $\delta > 0$. $Q$ is said to satisfy $\delta$-\emph{equalized generative} odds if
    \[
        \bigl| \pi_a^Q - \pi_{a'}^Q \bigr| \leq \delta,
        \quad \text{for all } a,a' \in \mathcal{A}.
    \]
    When $\delta = 0$, we say that $Q$ satisfies \emph{equalized generative odds}. Furthermore, $P$ and $Q$ are said to satisfy $\delta$-\emph{matching generative odds} if
    \[
        \bigl| \pi_a^Q - \pi_a^P \bigr| \leq \delta,
        \quad \text{for all } a \in \mathcal{A}.
    \]
    When $\delta = 0$, we say that $P$ and $Q$ satisfy \emph{matching generative odds}, meaning that the group proportions under $Q$ exactly match those of $P$.
\end{definition}


\textbf{Existing methods.}
Most approaches to fairness in generative modeling have focused on enforcing group proportions, either through EGO or MGO. A variety of methods have been developed with this goal in mind. For instance, \citet{choi_fair_2020} reweight the training distribution, enforcing EGO when the data is unbiased and approximating MGO otherwise. Since method has since been widely adopted and extended in the literature \cite{yazdani-jahromi_fair_2024,yan_forml_2022,kim_pfguard_2025} and adapted to diffusion models by \citet{kim_training_2024}. Other works modify the generation process itself, thereby directly controlling the proportion. For instance, \citet{frankel_fair_2018,humayun_magnet_2022,tan_improving_2021} regularize the latent space to reduce deviations from MGO, while \citet{zameshina_fairness_2022} apply rejection sampling based on predicted sensitive attributes, which can be tuned to enforce either EGO or MGO. Specifically for diffusion, \citet{parihar_balancing_2025} focus on using guidance to ensure EGO during training.

On the evaluation side, \citet{teo_measuring_2023} proposed classifier-based metrics designed to assess how well generative models satisfy EGO or MGO, but again the focus remained solely on proportions. Only very recently has work begun to go beyond this perspective. Other work such as \citet{mayer_improving_2024} and \citet{um_fair_2021} introduced an evaluation based on the FID between majority and minority groups, although this was limited to the case of synthetic data generation or GANS specific approach. In summary, while most prior work has focused on improving or evaluating fairness through proportions (EGO or MGO), little attention has been given to local distributional discrepancies, which can be more effectively captured by $f$-divergences for both evaluation and training.

%% file: content/brittleness.tex
\section{On the brittleness of existing definitions}
\label{sec:brittleness}
\begin{figure*}[!ht]
    \newcommand{\figheight}{92pt}
    \subfloat[Models with identical $\mathcal{D}_{\mathrm{JS}}(P\Vert Q)$]{\includegraphics[height=\figheight]{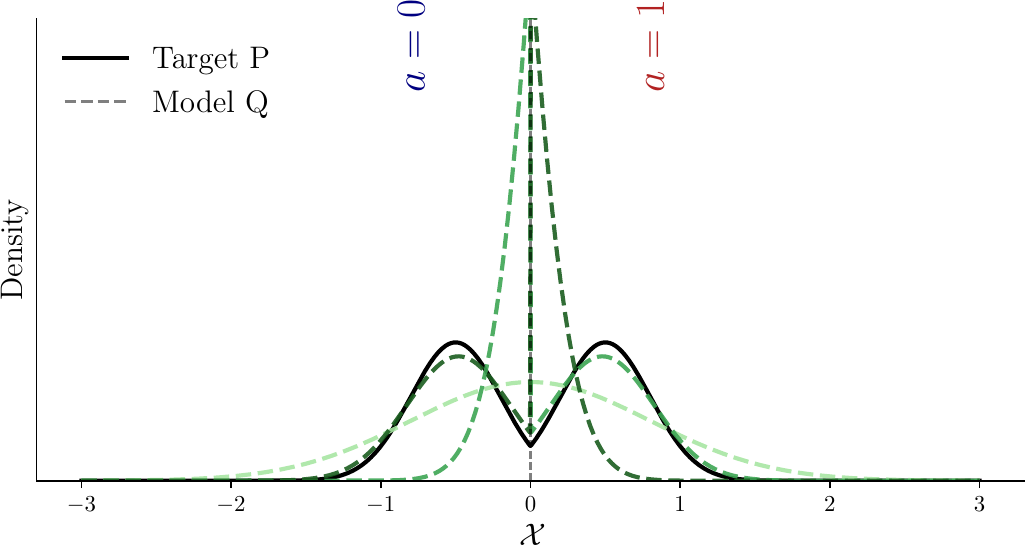}\label{fig:brittle1}}
    \hfill
    \subfloat[$\mathcal{D}_{\mathrm{JS}}(P\Vert Q)$]{\includegraphics[height=\figheight]{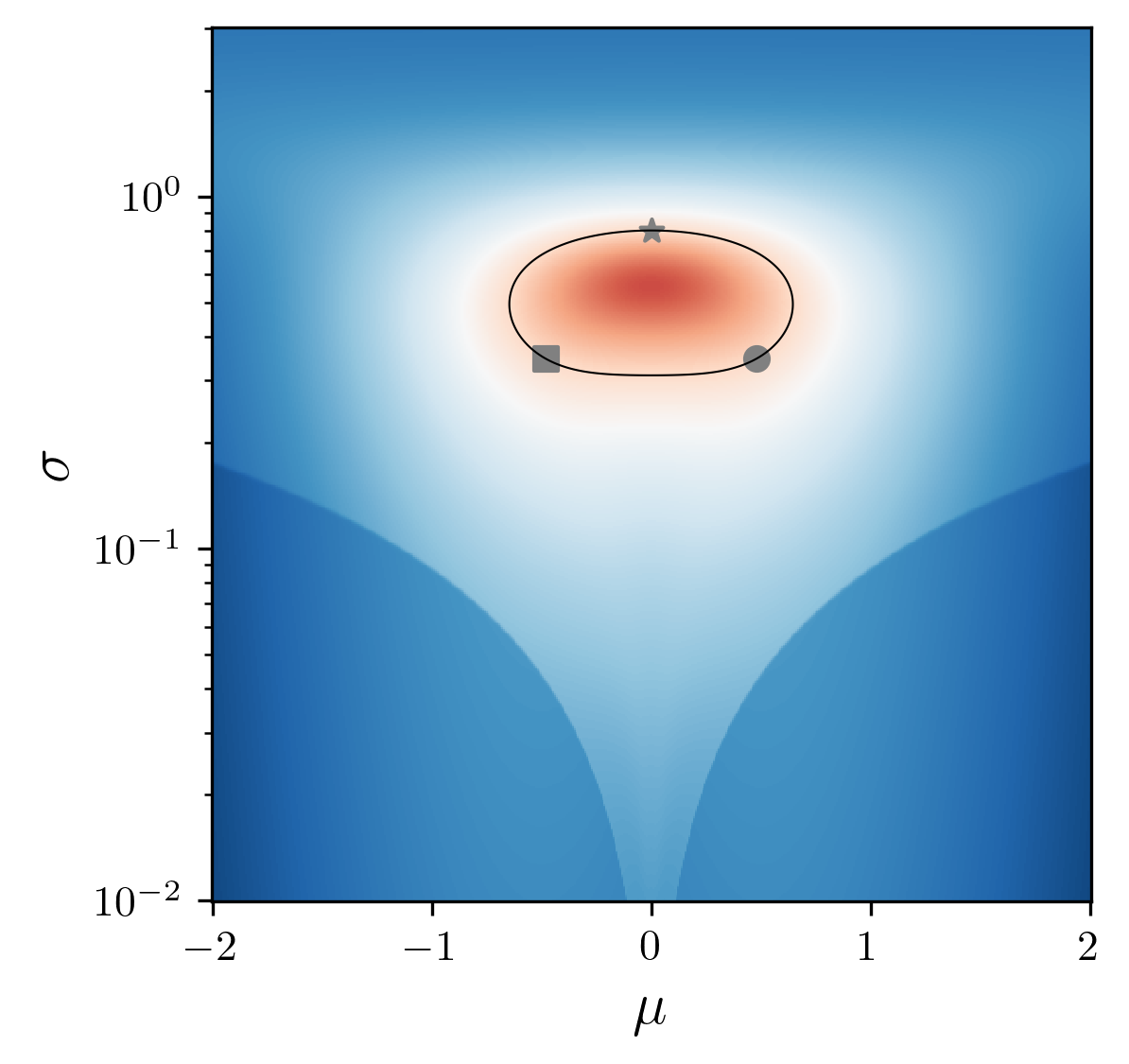}\label{fig:brittle2}}
    \hspace{.5em}
    \subfloat[$\mathcal{D}_{\mathrm{JS}}(P_0\Vert Q_0)$]{\includegraphics[height=\figheight]{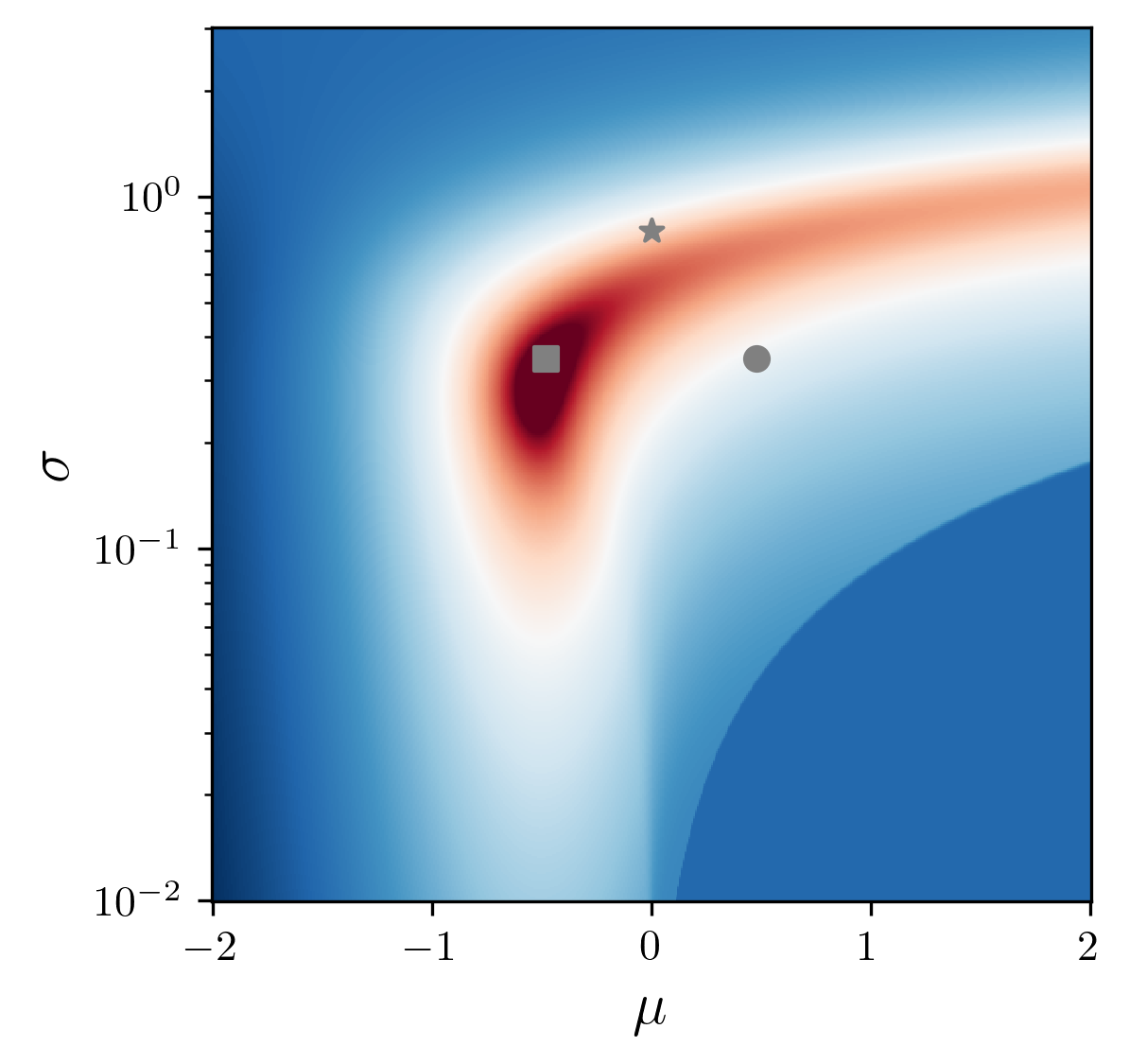}\label{fig:brittle3}}
    \hspace{.5em}
    \subfloat[$\mathcal{D}_{\mathrm{JS}}(P_1\Vert Q_1)$]{\includegraphics[height=\figheight]{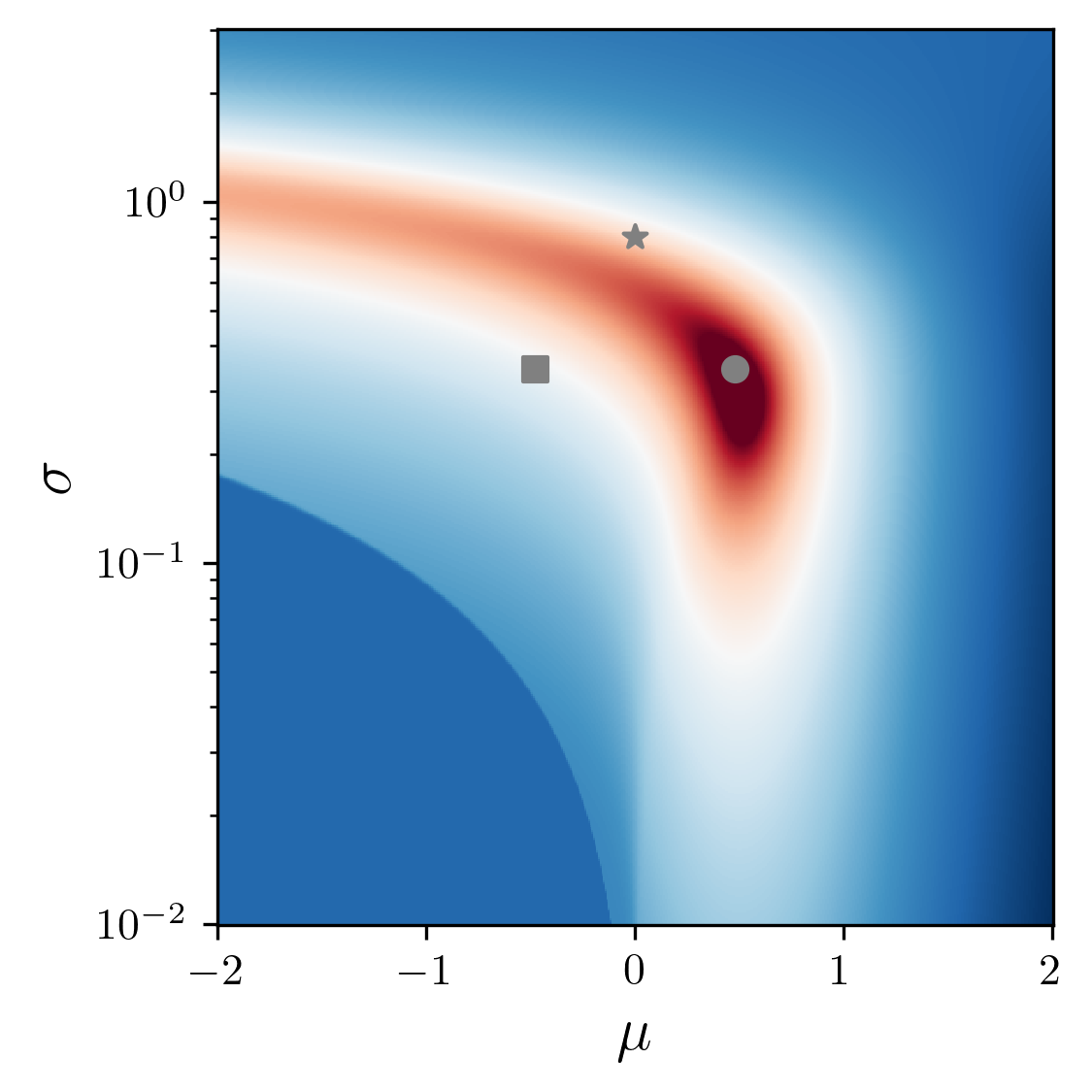}\label{fig:brittle4}}
    \caption{Jensen–Shannon divergence between a target distribution $P$ and rescaled Gaussian models $Q=\mathcal{N}(\mu,\sigma^2)$.
    (\ref{fig:brittle1}) Models with the same global divergence $\mathcal{D}_{\mathrm{JS}}(P\Vert Q)$ can still differ greatly.
    (\ref{fig:brittle2}) Level set for $\mathcal{D}_{\mathrm{JS}}=1$, with selected models marked by stars.
    (\ref{fig:brittle3})--(\ref{fig:brittle4}) Conditional  divergences for the two groups, models on the same level set may yield highly unbalanced conditional divergences.}
    \label{fig:brittle}
\end{figure*}
In Section~\ref{sec:fairness}, we discussed how most existing approaches to fairness in generative modeling are framed in terms of \emph{matching generative odds} (MGO) or \emph{equalized generative odds} (EGO). These notions capture fairness only at the level of group proportions, without accounting for the local behaviors (namely, how accurately the conditionals $(P_a)_{a\in\mathcal{A}}$ are reproduced). In this section, we argue that such a focus on proportions is inherently fragile. In particular, we show that even when MGO and EGO are perfectly satisfied, substantial discrepancies can (and does) remain in how different sensitive groups are modeled. We first illustrate this limitation through a simple example, then formalize it in a general theoretical result, and finally validate it empirically on state-of-the-art models.

\subsection{Theoretical Brittleness of Matching and Equalized Generative Odds}

As a warm-up, consider the setting illustrated in Figure~\ref{fig:brittle}, where $\mathcal{X} = \mathbb{R}$, $\mathcal{A} = \{0,1\}$, and the oracle function is $\psi = \mathds{1}_{\mathbb{R}_{+}}$. The target distribution is defined as $P = \tfrac{1}{2} P_0 + \tfrac{1}{2} P_1$, where $P_0$ and $P_1$ are truncated Gaussian distributions with respective means $\pm 0.5$ and standard deviation $0.3$. As model family $\mathcal{Q} $, we consider the set of univariate Gaussian distributions $\{ \mathcal{N}(\mu,\sigma^2) \mid (\mu,\sigma) \in \mathbb{R} \times \mathbb{R}_{+} \}$ that we rescale on each side of the origin to enforce MGO and EGO. To examine the potential disparities arising under these criteria, we focus on the Jensen Shannon divergence $\mathcal{D}_{\mathrm{JS}}$. For this divergence, we study the level set $\{ Q \in \mathcal{Q} \mid \mathcal{D}_{\mathrm{JS}}(P\|Q) = \epsilon \}$, for a fixed level $\epsilon = 1$.


Figure~\ref{fig:brittle2} shows that this level set allows for several pairs of admissible parameters $(\mu,\sigma) \in \mathbb{R} \times \mathbb{R}_{+} $, with three representative solutions highlighted by points marked with a star. On the one hand, for the upper star in Figure~\ref{fig:brittle2}, the conditional divergences $\mathcal{D}_{\mathrm{JS}}(P_0\|Q_0)$ and $\mathcal{D}_{\mathrm{JS}}(P_1\|Q_1)$ are both small and almost identical. On the other hand, at the left and right stars in Figure~\ref{fig:brittle2}, a clear imbalance appears. Indeed, in both cases, one of the conditional divergences is as low as $0.02$, while the other reaches $1.98$. This illustrative example shows that even when MGO and EGO are perfectly satisfied, the model can still exhibit arbitrarily poor fidelity for one of the sensitive groups.

\textbf{More general result.} The brittleness observed above is not a mere artifact of our toy example, but rather a general phenomenon. Even when $\mathcal{Q}$ is allowed to range over all distributions in $\PX$ that satisfy EGO and MGO with $P$, and even when the global $f$-divergence $\Df(P\|Q)$ is constrained to be arbitrarily small (but non-zero), it remains possible to design situations in which one group incurs an arbitrarily larger conditional divergence than the others. To make this precise, by analogy with the warm-up above, we introduce the \emph{$f$-divergence level set} around $P$ as
\[
    \mathcal{S}_{\Df}(P,\epsilon)
    \;=\;\bigl\{\, Q \in \PX \;\big|\; \Df(P\|Q) = \epsilon \,\bigr\}.
\]
By working within $\PX$, we impose no restriction on model expressivity, thereby going beyond the specificity of our toy example. In this setting, the phenomenon of imbalanced conditional divergences across attributes can be formalized as follows.

\begin{theorem}
    Let $P \in \PX$ be a non-trivial target distribution satisfying EGO, and let $f$ be a continuous function such that $\Df$ defines an $f$-divergence. For any $\epsilon \in (0, f(0) + \bar{f}(+\infty))$ and any $\gamma \in (0,\epsilon)$, there exists $Q^{\gamma} \in \mathcal{S}_{\Df}(P,\epsilon)$ such that $Q^{\gamma}$ satisfies MGO with $P$, but for which there exists $\bar{a} \in \mathcal{A}$ with
    \[
        \Df(P_{\bar{a}} \| Q^{\gamma}_{\bar{a}}) \;\geq\; \Df(P_{a} \| Q^{\gamma}_{a}) + \gamma,
        \qquad \forall a \in \mathcal{A}\setminus\{\bar{a}\}.
    \]
\end{theorem}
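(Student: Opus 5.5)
Under MGO the global divergence splits exactly over the groups, so the whole construction reduces to choosing per-group divergence values that add up to $\epsilon$ but are unequal, and realizing them by a continuity argument.

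\textbf{Step 1 (decomposition).} If $Q$ satisfies MGO with $P$, then $\pi_a^Q=\pi_a^P=\pi_a$ for all $a$, and the sets $\mathcal{X}_a$ partition $\mathcal{X}$. On $\mathcal{X}_a$ we have $p/q=p_a/q_a$, so the integral over $\Supp(Q)$ and the missing-mass term $\zeta$ both split over the groups. This gives
\[
\Df(P\|Q)=\sum_{a\in\mathcal{A}}\pi_a\,\Df(P_a\|Q_a).
\]
Since $P$ satisfies EGO, $\pi_a=1/|\mathcal{A}|$ for every $a$. Hence $\Df(P\|Q)$ is simply the average of the conditional divergences.

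\textbf{Step 2 (range of a single conditional).} I will show that, for a fixed non-trivial conditional $P_a$, the map $Q_a\mapsto \Df(P_a\|Q_a)$ takes every value in $[0,f(0)+\bar f(\infty))$. Consider $Q_a^t=(1-t)P_a+tR_a$, where $R_a\in\PX$ is supported on $\mathcal{X}_a$ with support disjoint from $\Supp(P_a)$. Such an $R_a$ exists if $\mathcal{X}_a\setminus\Supp(P_a)$ has positive Lebesgue measure. If it does not, I will instead take $P_a$ restricted to shrinking subsets, $Q_a^s=P_a(\cdot\mid B_s)$, which sends mass off $\Supp(Q)$; combining the two families covers the general case.

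For the mixture, a direct computation gives
\[
\Df(P_a\|Q_a^t)=(1-t)f\!\left(\tfrac{1}{1-t}\right)+t f(0).
\]
This expression is continuous in $t$ because $f$ is continuous. It equals $0$ at $t=0$ and tends to $\bar f(\infty)+f(0)$ as $t\to1$. By the intermediate value theorem, every value in $[0,f(0)+\bar f(\infty))$ is attained. This explains the upper endpoint in the hypothesis on $\epsilon$.

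\textbf{Step 3 (choice of targets).} Set $k=|\mathcal{A}|$ and fix $\bar a$. I want per-group values $d_a$ with
\[
\tfrac1k\sum_a d_a=\epsilon \quad\text{and}\quad d_{\bar a}\ge d_a+\gamma \ \text{ for all } a\neq\bar a.
\]
Take $d_a=\epsilon-\gamma/k$ for $a\neq\bar a$ and $d_{\bar a}=\epsilon+\gamma(k-1)/k$. Then the average is $\epsilon$ and the gap is exactly $\gamma$. Both values must lie in $[0,f(0)+\bar f(\infty))$. The smaller one is positive because $\gamma<\epsilon$. The larger one may exceed $f(0)+\bar f(\infty)$ when $\epsilon$ is close to the endpoint.

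\textbf{Main obstacle.} The risk is therefore the upper constraint on $d_{\bar a}$. I would first try to redistribute: put $d_{\bar a}$ as large as allowed, strictly below $f(0)+\bar f(\infty)$, and lower the other $d_a$ equally to keep the average at $\epsilon$. This increases the gap beyond $\gamma$, which is allowed since the statement only asks for $\ge\gamma$.

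Concretely, set $d_{\bar a}=\min\{\epsilon+\gamma(k-1)/k,\ M\}$, where $M$ is slightly below $f(0)+\bar f(\infty)$ and satisfies $M>\epsilon$. Then set $d_a=(k\epsilon-d_{\bar a})/(k-1)$ for $a\neq\bar a$. These are nonnegative and satisfy $d_a<\epsilon<d_{\bar a}$. The gap is
\[
d_{\bar a}-d_a=\frac{k(d_{\bar a}-\epsilon)}{k-1}.
\]
This is at least $\gamma$ provided $M$ is chosen close enough to the endpoint that $M-\epsilon\ge(k-1)\gamma/k$. I will need to verify carefully that this is possible when $\epsilon$ is near the endpoint and $\gamma<\epsilon$. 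If it fails, the fallback is to raise several groups simultaneously.

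\textbf{Conclusion.} With valid targets, Step 2 gives each $Q_a$ with $\Df(P_a\|Q_a)=d_a$. Define
\[
Q^\gamma=\sum_a \tfrac1k Q_a.
\]
This $Q^\gamma$ satisfies MGO with $P$ by construction, and by Step 1 it lies in $\mathcal{S}_{\Df}(P,\epsilon)$.
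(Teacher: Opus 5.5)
Your architecture is the paper's own. You split $\Df(P\|Q)$ over groups under MGO (the paper's decomposition lemma), use EGO to turn it into an average, aim for the targets $\epsilon-\gamma/k$ and $\epsilon+\gamma(k-1)/k$, realize them group by group, and mix with weights $1/k$.

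\textbf{Step 3 cannot be repaired.} The obstacle you flag is fatal, and neither of your fixes can work. Suppose $\frac1k\sum_a d_a=\epsilon$ and $d_a\le d_{\bar a}-\gamma$ for all $a\neq\bar a$. Then $k\epsilon\le k\,d_{\bar a}-(k-1)\gamma$, so $d_{\bar a}\ge\epsilon+\frac{k-1}{k}\gamma$ is necessary for \emph{every} admissible choice of targets. Your own gap formula $\frac{k(d_{\bar a}-\epsilon)}{k-1}$ says the same thing. Capping $d_{\bar a}$ at $M$ therefore just destroys the gap, and no redistribution, including raising several groups, escapes the bound.

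Convexity gives $f(t)\le f(0)+t\,\bar f(+\infty)$, so every $f$-divergence is at most $f(0)+\bar f(+\infty)$. Hence the statement as written is false whenever this quantity is finite and $\epsilon+\frac{k-1}{k}\gamma>f(0)+\bar f(+\infty)$. For a concrete case, take total variation, $f(t)=|t-1|/2$, whose range is $(0,1)$, with $k=2$, $\epsilon=0.9$ and $\gamma=0.5$. These satisfy the hypotheses but would require a conditional divergence of at least $1.15$. The same failure occurs for Jensen--Shannon, the paper's running example.

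The paper's proof has exactly this hole. It invokes the surjectivity lemma for the value $\epsilon+\gamma\frac{|\mathcal{A}|-1}{|\mathcal{A}|}$ without checking that this value lies in $(0,f(0)+\bar f(+\infty))$. What is actually provable is the statement under the extra hypothesis $\epsilon+\gamma\frac{|\mathcal{A}|-1}{|\mathcal{A}|}<f(0)+\bar f(+\infty)$, which is automatic when the right-hand side is $+\infty$ (e.g.\ KL). Under that hypothesis your naive targets already work and no redistribution is needed.

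\textbf{Step 2 has a separate gap.} Your two families fail when $\bar f(+\infty)=+\infty$ and $\mathcal{X}_a\setminus\Supp(P_a)$ is Lebesgue-null. An example is KL with $f(t)=t\log t$ and $P_a$ a truncated Gaussian filling $\mathcal{X}_a$, as in the paper's toy example. No disjoint $R_a$ exists, and $P_a(\cdot\mid B_s)$ gives $f(s)+\bar f(+\infty)(1-s)=+\infty$ for every $s<1$. So your families reach only $\{0,+\infty\}$, and the intermediate value argument has nothing to work with.

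The fix is the paper's surjectivity lemma: reweight inside the support. Take density $\frac{1}{\alpha\beta}p_a$ on a set $A_\alpha$ with $P_a(A_\alpha)=\alpha$, and $\frac{\beta-1}{\beta(1-\alpha)}p_a$ on its complement. All likelihood ratios then stay in $(0,+\infty)$, so the divergence $\frac1\beta f(\alpha\beta)+\frac{\beta-1}{\beta}f\left(\frac{(1-\alpha)\beta}{\beta-1}\right)$ is finite and continuous. It equals $0$ at $(\alpha,\beta)=(1/2,2)$ and tends to $f(0)+\bar f(+\infty)$ as $\alpha\to1$ and $\beta\to+\infty$. It therefore covers $[0,f(0)+\bar f(+\infty))$ for every continuous $f$.
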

In other words, even when both EGO and MGO are met, it is always possible to build a model performing arbitrarily worse on one group than on the others. This establishes that global fairness criteria based solely on proportions provide no guarantee of conditional quality for the groups.

\begin{figure*}[t!]
    \newcommand{\figheightdiff}{110pt}
    \centering
    \subfloat[EDM under MGO or EGO constraints.\label{fig:ffhq_mgo_vp}]{
        \includegraphics[height=\figheightdiff]{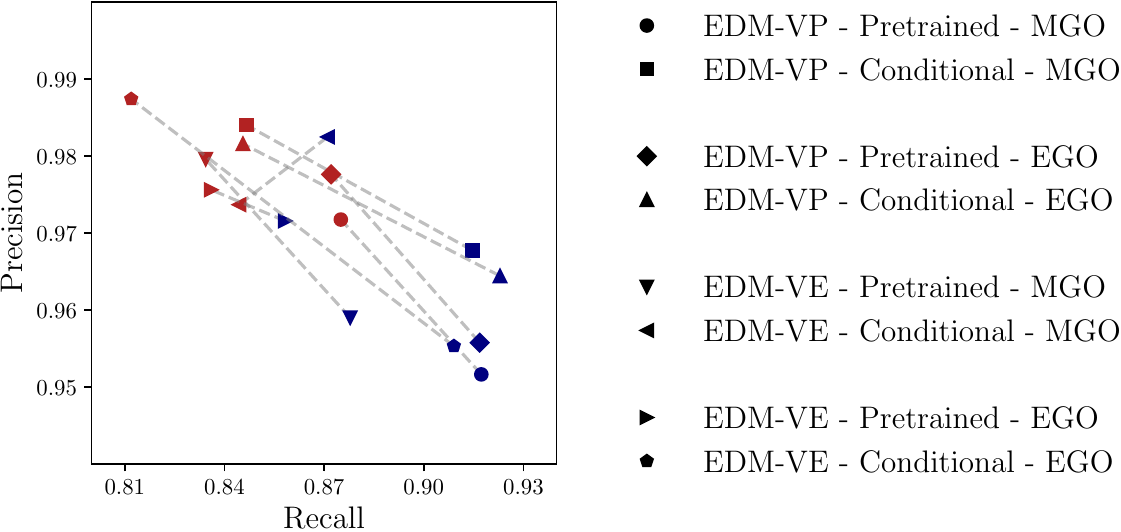}}\hfill
    \subfloat[LLaMA-3.2-Chat \& Gemma-3 under MGO constraints.\label{fig:llama_chat}]{
        \includegraphics[height=\figheightdiff]{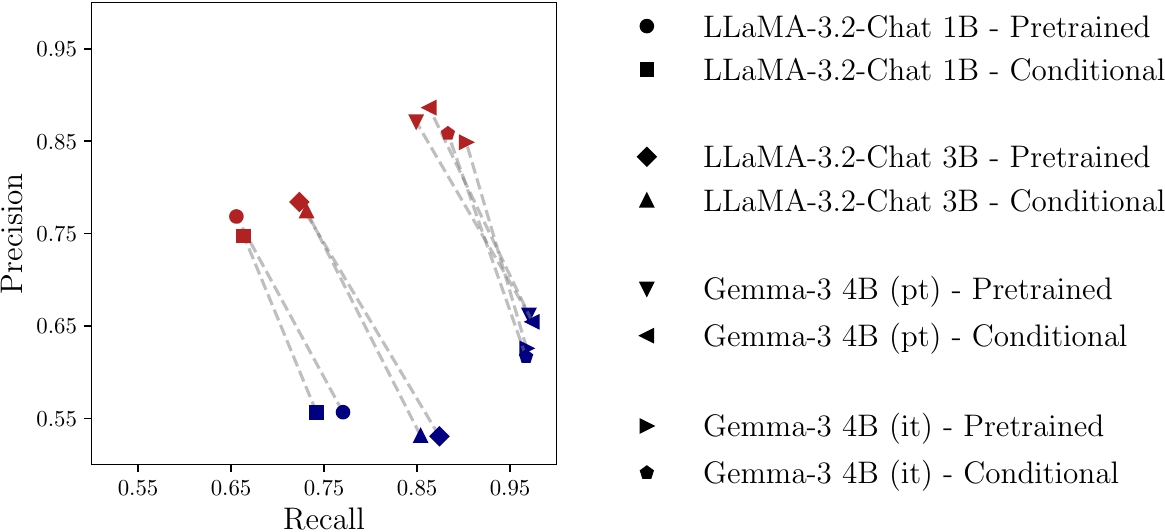}}
    \caption{Precision and recall for EDM (VE and VP) on FFHQ (\ref{fig:ffhq_mgo_vp}) and for LLaMA-3.2-Chat (1B and 3B) and Gemma-3 4B (pretrained and instructed-tuned) on the Wikipedia Biographies (\ref{fig:llama_chat}) under two settings: pretrained and conditional. At the sampling stage, rejection sampling is used to enforce either MGO or EGO. Each sensitive-group is color coded (red or blue) and the points corresponding to the subgroups for a given model are connected to each other by a dashed line. We observe significant discrepancies in precision and recall persist across groups, demonstrating the brittleness of proportion-based definitions.}
    \label{fig:ffhq}
\end{figure*}
\subsection{Observing the brittleness in practice}

The brittleness highlighted above is not only theoretical: it also manifests in practice with off-the-shelf generative models. To demonstrate this, we run experiments on competitive diffusion models (images) and large language models (text). Experimental details are provided below, with additional information in Appendix~\ref{app:sec:experiments}.

\textbf{Datasets and sensitive attributes.} Two modalities are considered: images and text. For image generation, we consider FFHQ \citep{karras_style-based_2019} and focus on gender as the sensitive attribute (44\% male, 56\% female). For text generation, experiments rely on a dataset derived from Wikipedia Biographies \citep{bronnec_exploring_2024}, again using gender as the sensitive attribute (approximately 75\% male, and 25\% female respectively).

\textbf{Methods.} An unconditional baseline (pretrained by the original authors) is compared to a conditional baseline that explicitly conditions on the sensitive attribute, enabling direct control of group proportions. For a fair comparison, rejection sampling is applied to the unconditional model so that generated samples match the desired sensitive-group proportions \cite{zameshina_fairness_2022}. MGO or EGO is enforced for the image task, while only MGO is enforced for the text task due to computational constraints.

\textbf{Models.} Image generation relies on Elucidated Diffusion Models (EDM) \citep{karras_elucidating_2022} with both variance-preserving (VP) and variance-exploding (VE) parameterizations. The unconditional baseline uses the authors' pretrained checkpoints, while the conditional model is obtained by fine-tuning on FFHQ using class-conditional embeddings. Text generation uses LLaMA-3.2-Chat \citep{grattafiori_llama_2024} (1B and 3B) and Gemma-3 4B \citep{gemma_team_gemma_2025} in both pretrained (pt) and instruction-tuned (it) variants. Conditional generations are produced by prompting the base models with the sensitive attribute.

\textbf{Evaluation metrics and oracle functions.} Group-wise performance is assessed via precision and recall per sensitive group using Topological Precision and Recall (TopP\&R) \citep{kim_refining_2023}. This procedure requires an embedding model and an oracle function $\psi$ defining the sensitive groups. For FFHQ, the oracle is a fine-tuned DINOv2 ViT \cite{oquab_dinov2_2024} gender classifier, and its embeddings are used for TopP\&R. For Wikipedia Biographies, a keyword-based oracle is enough to perfectly classify the attributes, and we used Qwen3-4B text-embeddings \citep{zhang_qwen3_2025}.

\textbf{Observed disparities.} Figure~\ref{fig:ffhq} reports precision and recall by sensitive group. Despite enforcing MGO (and EGO when applicable), substantial disparities persist. For diffusion models on FFHQ, group gaps reach up to 3.2\% in precision and 7.7\% in recall. For LLaMA-3.2 and Gemma-3 on Wikipedia Biographies, disparities are larger, reaching up to 25.37\% in precision and 15.06\% in recall. In most cases, precision is higher for the minority group (female), whereas recall is higher for the majority group (male), suggesting a systematic trade-off across groups. Overall, these results empirically validate the brittleness of MGO/EGO: satisfying proportion-based criteria does not prevent large group-wise differences in conditional quality.

%% file: content/equalized.tex
\section{Equalized Generative Treatment (EGT)}
\label{sec:newdefinition}

The analysis in Section~\ref{sec:brittleness} shows that fairness criteria based solely on group proportions, such as MGO and EGO, are inherently brittle. To address this limitation, we introduce a simple yet fundamental fairness criterion, called \emph{equalized generative treatment} (EGT). 
We then show that applying this definition promotes the minimization of the highest $f$-divergence between conditionals among sensitive groups.

\subsection{Definition and First Property}

We now introduce the notion of \emph{equalized generative treatment} (EGT). This criterion provides a stronger, more fine-grained notion of fairness by explicitly linking generative quality to each subgroup using a reference $f$-divergence.

\begin{definition}
    Let $P, Q \in \PX$ and let $f$ be such that $\Df$ defines an $f$-divergence. For any $\delta > 0$, we say that $Q$ and $P$ satisfy \emph{$\delta$-equalized generative treatment} ($\delta$-EGT) w.r.t. $\Df$ if for all $a,a' \in \mathcal{A}$ one has
    \[
        \bigl| \Df(P_a \| Q_a) - \Df(P_{a'} \| Q_{a'}) \bigr| \leq \delta.
    \]
    When $\delta = 0$, we say that $Q$ and $P$ satisfy \emph{equalized generative treatment} w.r.t. $\Df$.
\end{definition}

This definition captures the idea that the divergence experienced by each sensitive group should be approximately equal. In contrast to MGO or EGO, which constrain only proportions, EGT enforces a parity w.r.t. the actual metric that is being used to train or evaluate our generative model.


\textbf{Conditional closure.}  Conceptually, the best way to approximate a target distribution $P$ while respecting $\delta$-EGT would be to treat each sensitive group independently. More specifically, for every $a \in \mathcal{A}$, one would learn a conditional distribution $Q_a$ that minimizes $\Df(P_a \| Q_a)$, and then recombine the conditional models using the true proportions $(\pi_a^P)_{a \in \mathcal{A}}$ of the target distribution. This procedure would ensure that each sensitive group is treated equally and that the final mixture aligns with $P$ as closely as possible. In practice, however, generative models are rarely designed in such a conditional manner. Instead, they typically produce distributions $Q$ as a whole, without the ability to freely optimize and reassemble conditionals. As a result, the typical family $\mathcal{Q}$ of candidate models used in practice seldom captures what would be achievable if sensitive group-level flexibility were available. To reason about this gap, we introduce the \emph{conditional closure} of $\mathcal{Q}$, an augmented set of distributions that allows explicit recombination of conditionals. \medskip

\begin{definition}
    Let $\mathcal{Q} \subseteq \PX$ be a family of candidate models.  For each $a \in \mathcal{A}$, let $\mathcal{Q}_a = \bigl\{ R \in \mathcal{P}_\lambda(\mathcal{X}_a) \mid \exists Q \in \mathcal{Q} \text{ with } Q_a = R \bigr\}$  be the set of conditional distributions for group $a$ within $\mathcal{Q}$.
    Let also $\Delta_\mathcal{Q} = \bigl\{ (\pi_a)_{a \in \mathcal{A}} \in \Delta(\mathcal{A}) \mid \exists Q \in \mathcal{Q} \text{ with } (\pi_a^Q)_{a \in \mathcal{A}} = (\pi_a)_{a \in \mathcal{A}} \bigr\}$ be the set of group proportions in $\mathcal{Q}$. Then the \emph{conditional closure} of $\mathcal{Q}$, denoted $\overline{\mathcal{Q}}_\mathcal{A}$, is the set of distributions for which
    is defined as the set of distributions for which the conditional w.r.t. $a$ belong to $\mathcal{Q}_a$ for ever and proportions lie in $\Delta_\mathcal{Q}$, i.e.,$
        \overline{\mathcal{Q}}_\mathcal{A} = \Bigl\{ \sum_{a \in \mathcal{A}} \pi_a R_a, \mid  (\pi_a)_{a\in\mathcal{A}} \in \Delta_\mathcal{Q} \text{ and } \forall a \in \mathcal{A}, R_a \in \mathcal{Q}_a \Bigr\}. $
\end{definition}

Intuitively, $\overline{\mathcal{Q}}_\mathcal{A}$ can be seen as the completed version of $\mathcal{Q}$ for the fairness-constrained problem. It extends $\mathcal{Q}$ by allowing to select conditional models for each sensitive group from $\mathcal{Q}$ and then reassemble them under any group proportions realizable by $\mathcal{Q}$. In this sense, $\overline{\mathcal{Q}}_\mathcal{A}$ represents a best-case modeling scenario. If the original set $\mathcal{Q}$ has been explicitly designed to support attribute-conditional subdivision, then we may have $\mathcal{Q} = \overline{\mathcal{Q}}_\mathcal{A}$. In general, however, $\overline{\mathcal{Q}}_\mathcal{A}$ strictly contains $\mathcal{Q}$, since most generative models are not conditionally structured for sensitive attributes. The role of $\overline{\mathcal{Q}}_\mathcal{A}$ is to provide a principled baseline for understanding the limits of generative modeling under fairness constraints. Theorem~\ref{thm:equalized generative treatment-lower-bound} shows that, under the $\delta$-EGT, the global divergence $\Df(P | Q)$ is bounded below by the largest conditional divergence across sensitive groups in is the best model within $\overline{\mathcal{Q}}_\mathcal{A}$.

\begin{theorem}
    \label{thm:equalized generative treatment-lower-bound}
    Let $P \in \PX$ be a non-trivial target probability distribution and $f$ be a function such that $\mathcal{D}_f$ defines an $f$-divergence. Let $\mathcal{Q}$ be set of candidate probability distributions satisfying MGO with $P$ and such that there exists $Q^\star \in \argmin_{ Q \in \overline{\mathcal{Q}}_\mathcal{A}} \Df\left( P \| Q \right).$  Then for any $\delta >0 $, if $Q \in \mathcal{Q}$ and $P$ satisfy $\delta$-EGT w.r.t. $\Df$, then
    \begin{equation*} \Df(P \| Q) \geq  \max_{a \in \mathcal{A} } \Df(P_a \| Q_a^\star) - \delta.
    \end{equation*}
\end{theorem}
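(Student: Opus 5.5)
The plan is to show that, under MGO, the global divergence splits into a weighted sum of the conditional divergences. Then the optimizer over the conditional closure is also a coordinatewise optimizer, and $\delta$-EGT converts the per-group lower bounds into a bound by the worst group.

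\textbf{Step 1 (decomposition under MGO).} Let $Q$ satisfy MGO with $P$, so that $\pi_a^Q=\pi_a^P=:\pi_a$ for every $a$. On $\mathcal{X}_a$ we have $p=\pi_a p_a$ and $q=\pi_a q_a$, hence $p/q=p_a/q_a$ wherever $q>0$. We also have $\Supp(Q)\cap\mathcal{X}_a=\Supp(Q_a)$. The sets $(\mathcal{X}_a)_a$ partition $\mathcal{X}$, so we split the integral in Definition~\ref{def:fdiv} over the $\mathcal{X}_a$. The mass term splits in the same way:
\[
P(\mathcal{X}\setminus\Supp(Q))=\sum_a \pi_a\, P_a(\mathcal{X}_a\setminus\Supp(Q_a)).
\]
Together these give $\Df(P\|Q)=\sum_{a}\pi_a\,\Df(P_a\|Q_a)$. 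The conventions $f(0)=\lim_{t\to0^+}f(t)$ and $0\times\infty=0$ should be carried along term by term; this is routine.

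\textbf{Step 2 (structure of $Q^\star$).} Every $Q\in\mathcal{Q}$ satisfies MGO, so $\Delta_\mathcal{Q}=\{(\pi_a^P)_a\}$. Hence every element of $\overline{\mathcal{Q}}_\mathcal{A}$ has the form $\sum_a\pi_a^P R_a$ with $R_a\in\mathcal{Q}_a$, and it also satisfies MGO. By Step 1, minimizing $\Df(P\|\cdot)$ over $\overline{\mathcal{Q}}_\mathcal{A}$ is therefore the minimization of the separable objective $\sum_a\pi_a\Df(P_a\|R_a)$ over the product $\prod_a\mathcal{Q}_a$. I claim that for each $a$ and each $R\in\mathcal{Q}_a$ one has $\Df(P_a\|Q^\star_a)\le\Df(P_a\|R)$. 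Suppose instead that some $R$ does strictly better for group $a$. Replacing $Q^\star_a$ by $R$ while keeping the other conditionals gives an element of $\overline{\mathcal{Q}}_\mathcal{A}$ with strictly smaller divergence, because $\pi_a>0$ by non-triviality. This contradicts optimality.

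This step is the main obstacle: the contradiction needs the other terms to be finite. If $\Df(P\|Q^\star)=+\infty$, some group has infinite divergence under every choice in its $\mathcal{Q}_a$ (otherwise we could do better). I would treat this case separately. Since the closure allows each group's conditional to be chosen independently, the groups with finite optimal value can still be argued coordinatewise. For a group whose divergence is infinite under every $R\in\mathcal{Q}_a$, the target inequality $\Df(P_a\|Q_a)\ge\Df(P_a\|Q^\star_a)$ holds trivially. Either way, for every $Q\in\mathcal{Q}$, since $Q_a\in\mathcal{Q}_a$, we get $\Df(P_a\|Q_a)\ge\Df(P_a\|Q^\star_a)$ for all $a$.

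\textbf{Step 3 (use EGT and average).} Let $Q\in\mathcal{Q}$ satisfy $\delta$-EGT, and let $\bar a\in\argmax_a\Df(P_a\|Q^\star_a)$. For every $a$, EGT and Step 2 give
\[
\Df(P_a\|Q_a)\ \ge\ \Df(P_{\bar a}\|Q_{\bar a})-\delta\ \ge\ \Df(P_{\bar a}\|Q^\star_{\bar a})-\delta .
\]
Multiply by $\pi_a$ and sum over $a$. Using $\sum_a\pi_a=1$ and Step 1 applied to $Q$, this yields $\Df(P\|Q)\ge\max_a\Df(P_a\|Q^\star_a)-\delta$, which is the claim.
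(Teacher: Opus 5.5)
Your argument is essentially the paper's. It uses the same decomposition under MGO, the same coordinate swap (which works because $\overline{\mathcal{Q}}_\mathcal{A}$ is a product set with the fixed weights $\pi^P$), and the same $\pi_a$-weighted averaging of the $\delta$-EGT inequalities. The paper runs it by contradiction, swapping only the $a_\#$-conditional of $Q^\star$ for $Q_{a_\#}$. You instead first prove coordinatewise optimality of $Q^\star$ and then argue directly. In the finite case the two are the same proof.

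One step is wrong as written: your treatment of the case $\Df(P\|Q^\star)=+\infty$. When the total is infinite, improving a finite group's conditional does not strictly lower it. So nothing forces $Q^\star$ to be coordinatewise optimal on the groups you call finite. For example, suppose every $R\in\mathcal{Q}_0$ has $\Df(P_0\|R)=+\infty$, and $\mathcal{Q}_1$ contains $R_1,R_1'$ with $\Df(P_1\|R_1)=1<2=\Df(P_1\|R_1')$. Then every element of $\overline{\mathcal{Q}}_\mathcal{A}$ is a minimizer, including one with $Q^\star_1=R_1'$. Hence the claims that ``the groups with finite optimal value can still be argued coordinatewise'' and ``for every $Q\in\mathcal{Q}$, $\Df(P_a\|Q_a)\ge\Df(P_a\|Q^\star_a)$ for all $a$'' are false in general.

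The repair is one line. If $\Df(P\|Q)=+\infty$ there is nothing to prove. Otherwise $Q=\sum_a\pi_a^P Q_a$ with $Q_a\in\mathcal{Q}_a$, so $Q\in\overline{\mathcal{Q}}_\mathcal{A}$ and hence $\Df(P\|Q^\star)\le\Df(P\|Q)<\infty$. Every $\Df(P_a\|Q^\star_a)$ is then finite because $\pi_a^P>0$, and your swap argument goes through verbatim. The paper's strict inequality in its final chain tacitly needs the same finiteness of the untouched terms. It holds there for the same reason, since the contradiction hypothesis forces $\Df(P\|Q)<\infty$.
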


This result highlights the fact that enforcing EGT promotes the minimization of the highest $f$-divergence between conditionals among groups, which naturally leads us to studying min-max schemes, as explained in the following section.

\subsection{Min-Max as Candidate for Enforcing EGT}

Before diving into empirical considerations in the next section, we first analyze the feasibility of training a model that satisfies EGT from a theoretical viewpoint. To do so, we consider an arbitrary target distribution $P$ and a set of candidate models $\mathcal{Q}$. We would like to find the distributions $Q \in \mathcal{Q}$ that approximate $P$ the best, but also need to enforce $\delta$-EGT. In this context, the most direct and natural strategy we be to solve the following regularized problem
\[
    \min_{Q \in \mathcal{Q}} \Df(P \| Q) +  \lambda \hspace{-5pt} \sum_{a,a'\in\mathcal{A}}
    \bigl| \Df(P_a \| Q_a) - \Df(P_{a'} \| Q_{a'}) \bigr|,
\]
where $\lambda \geq 0$ controls the strength of the EGT regularization. $\lambda=0$ recovers standard divergence minimization, while larger values place increasing emphasis on balancing the conditional divergences across groups. However, while conceptually appealing, directly enforcing EGT via $f$-divergence regularization is impractical in modern training loops. $f$-divergences are inherently one-sided, yielding either a tractable minimization objective (variational) or a maximization surrogate (adversarial), but not both simultaneously~\citep{huszar_how_2015,arjovsky_towards_2017}.

To circumvent the limitations of directly enforcing EGT through regularization, the next natural candidate is to rewrite the problem as a min-max optimization scheme,
\begin{equation*}
    \min_{Q\in\mathcal{Q}}\max_{a\in\mathcal{A}} \Df\left(P_a\Vert  Q_a\right).\label{eq:minmax}
\end{equation*}

Instead of minimizing a regularized $f$-divergence, this novel objective minimizes the worst conditional divergence across groups, pushing the model toward balancing its errors. This method thus directly aligns with the theoretical characterization of EGT given in Theorem~4.3. Contrarily to regularization, min-max schemes are practical as they are already implemented in many context related to distributionally robust optimization and group-robust learning~\cite{oren_distributionally_2019,sagawa_distributionally_2020}. In the next section, we therefore study this kind of scheme as a practical approximations to EGT rather than relying on direct regularization.

%% file: content/experiments.tex
\begin{figure*}[!bt]
    \centering
    \includegraphics[width=0.95\textwidth]{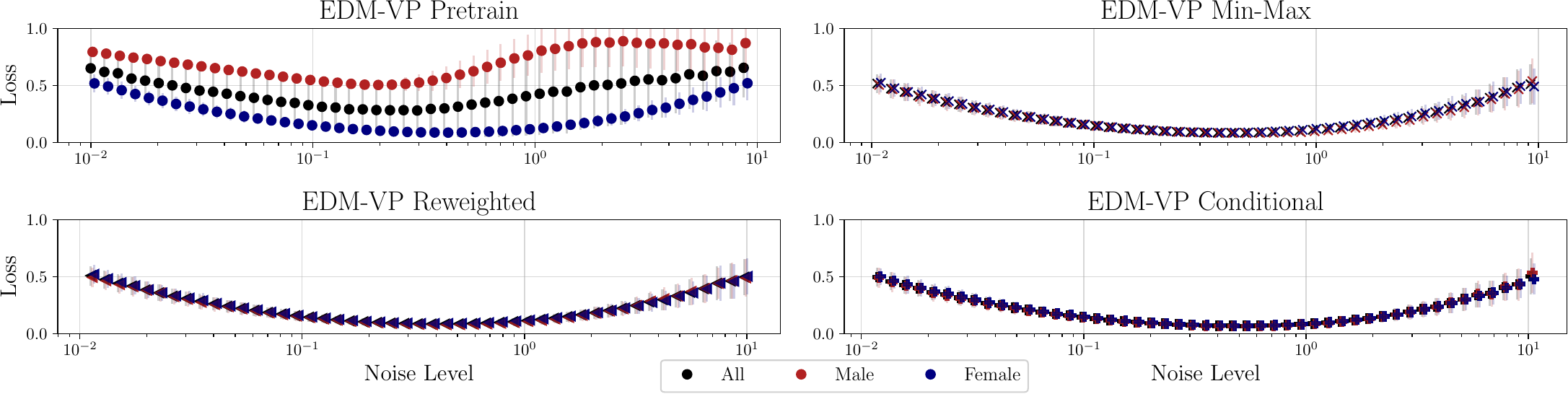}
    \caption{Estimated denoising losses per noise level for EDM-VP trained on FFHQ. Baseline exhibits a persistent gap between male and female groups. Reweighting and Min–Max reduce the gap, while conditional training almost eliminates it.}
    \label{fig:ffhq_losses_comparison}
\end{figure*}

\section{Improved fairness through EGT}
\label{sec:improved}

This section evaluates whether optimization strategies that balance training losses across sensitive groups also improve fairness under our EGT criterion on evaluation metrics.

\textbf{Compared methods.}
Three strategies are considered.
\textit{Min--Max training} is the only approach that explicitly targets EGT by minimizing the worst group-wise conditional divergence (Theorem~4.3). This popular objective where equalizing (or protecting) worst-group \emph{training} losses is often observed to improve robustness. In generative modeling, however, such equalization is typically reported at the level of the optimized loss, and its impact on sample-based evaluation fairness metrics is less understood. \textit{Conditional training} provides direct control of MGO/EGO at generation time by learning attribute-conditional distributions. While it does not optimize EGT explicitly, it mitigates the brittleness highlighted by Theorem~3.1 by preventing the training objective from collapsing onto a single group. In this sense, conditioning can be seen as a practical mechanism to avoid pathological error concentration, even though it does not enforce EGT. Finally, \textit{reweighted loss} is a widely used baseline in fair generative modeling: it rescales per-sample losses to emphasize underrepresented groups and is primarily designed to improve the proportion-based criteria EGO. Importantly, RW comes with no guarantee for EGT and can even \emph{increase} conditional quality gaps.

\textbf{Methods and implementation details.} Since the model families and datasets are already introduced in Section~\ref{sec:brittleness} and fully detailed in Appendix~\ref{app:sec:experiments}, the focus here is on how the three training strategies are implemented. For diffusion, the baseline starts from the public EDM checkpoints (VP/VE) of \citet{karras_elucidating_2022}. For text, LoRA adapters are fine-tuned on biography generation for LLaMA-3.2-Chat (1B/3B) and Gemma-3 (4B, pretrained and instruction-tuned). Conditional variants are obtained by adding standard class-conditioning to diffusion and by training LLMs under an attribute-constrained prompt (e.g., ``a male/female person''). Reweighting rescales the negative log-likelihood (LLMs) or denoising/reconstruction objective (diffusion) with group-dependent weights  equal to $\pi_a^P/\pi_a^Q$.

\textbf{Min--Max training and stability improvements.}
For diffusion, Min--Max must account for the multi-noise nature of training: rather than selecting a single worst group globally, the max is taken \emph{per noise level} (Algorithm~\ref{alg:minmax-diffusion-noise}), ensuring that no subset of the diffusion trajectory disproportionately harms a group. This ``per-noise'' maximization is essential in practice because different groups can dominate the loss at different noise regimes. For both diffusion and LLM fine-tuning, the worst-group selection is stabilized via an exponential moving average (EMA) of group-wise losses (Algorithm~\ref{alg:minmax-ema}). This EMA acts as a short memory that reduces batch-to-batch variance, prevents rapid oscillations of the maximizer, and avoids degenerate updates driven by a single noisy minibatch. Empirically, it yields markedly more stable optimization than naive per-batch maximization, especially in the low-batch regime of LLM fine-tuning.

\textbf{Image generation: closing loss gaps is not enough.}
For EDM-VP on FFHQ, Figure~\ref{fig:ffhq_losses_comparison} reports estimated denoising losses across noise levels. The pretrained baseline exhibits persistent loss gaps between male and female groups, and all three strategies substantially reduce these gaps (average losses are reported in Appendix Table~\ref{tab:train_loss_uncond_cond}). This observation matches prior evidence in robust learning: Min-Max objectives can effectively \emph{equalize the optimized training loss} across groups. However, balancing the training loss does not guarantee balanced performance on evaluation metrics. The training objective is only a surrogate for generative quality, and improvements in loss can translate imperfectly to sample-based evaluation metrics. Therefore, closing subgroup loss gaps should be viewed as a useful diagnostic, but not a sufficient condition for improved EGT at evaluation.

\textbf{Image generation: improvements in EGT metrics.}
Table~\ref{tab:ffhq_all_deltas_bold_nested_nostd_pr_vp_all} summarizes EGT disparities on precision, recall, and their sum $P\!+\!R$ (a valid $f$-divergence by linearity), together with $\delta$-FID for completeness. On EDM-VP, Min--Max yields a clear and stable gain: $\delta$-PR drops from $2.2$ (pretrained) to $0.3$, with $\delta$-P and $\delta$-R collapsing from $2.0/4.2$ to $0.1/0.2$. In contrast, conditional training can reduce $\delta$-P but may worsen $\delta$-R (and therefore $\delta$-PR), and reweighting sometimes improves disparities but not consistently across settings. Overall, explicitly optimizing the worst-group divergence appears necessary to obtain systematic improvements under EGT.

\begin{table}[!t]
    \centering
    \caption{Evaluation of the different optimization methods on the FFHQ dataset with EDM models. We report the difference in precision/recall  between subgroups. All results are in percentage points. The best results are in bold per method generation method and per metric and disparties better than the unconstrained baseline are underlined.}
    \label{tab:ffhq_all_deltas_bold_nested_nostd_pr_vp_all}
    \input{tables/ffhq_vpve.tex}
    \vspace{-10pt}
\end{table}

\textbf{Text generation: larger initial gaps and partial improvements.}
Results on Wikipedia Biographies (Table~\ref{tab:llama_joint_delta_topo_with_mgoego_withstd_all}) exhibit substantially larger initial disparities than diffusion models. As in the image setting, conditional training and reweighting yield mixed outcomes: improvements are sometimes observed, but they are not systematic. Min--Max again provides the most consistent reduction in $\delta$-P, $\delta$-R, and $\delta$-PR, although the magnitude of improvement is smaller in absolute terms. A key practical difference is computational: LLM fine-tuning is significantly more expensive than diffusion fine-tuning in our setup, which limits the extent to which optimization can reshape representations.

\begin{table}[!t]
    \caption{Evaluation of the different optimization methods used to finetune LLaMa3.2 Chat 1b and 3b on the Wikipedia Biographies dataset. We report the difference in precision/recall  between subgroups. All results are in percentage points.  The best results are in bold per method generation method and per metric and disparties better than the unconstrained baseline are underlined.}
    \label{tab:llama_joint_delta_topo_with_mgoego_withstd_all}
    \centering
    \small
    \input{tables/llms.tex}
    \vspace{-10pt}
\end{table}

\textbf{Takeaway.}
Overall, balancing group-wise training losses is not sufficient to guarantee EGT improvements on evaluation divergences. Among the tested strategies, Min--Max is the only method that \emph{systematically} reduces EGT disparities across modalities, making it a promising approach for fairness beyond proportion matching. In addition, the empirical correlations in Table~\ref{tab:correlations_all_deltas} computed across all experiments confirm the brittleness result from Section~\ref{sec:brittleness}: $\delta$-MGO and $\delta$-EGO show only negligible association with $\delta$-P, $\delta$-R, and $\delta$-PR, so matching proportions alone provides little predictive control over group-wise conditional quality.
\begin{table}[H]
    \centering
    \small
    \setlength{\tabcolsep}{3.5pt}
    \caption{Correlations between different $\delta$-MGO, $\delta$-EGO, $\delta$-Precision ($\delta$-P), $\delta$-Recall ($\delta$-R), $\delta$-Precision/Recall ($\delta$-PR), and $\delta$-FID, computed over all diffusion and LLM experiments.}
    \input{tables/correlations2.tex}
    \label{tab:correlations_all_deltas}
\end{table}

\textit{Disclaimer.} These fairness improvements do not come for free. As discussed in Section~\ref{sec:newdefinition}, reducing disparities typically comes at the expense of the best-performing group. This trade-off is visible in Appendix~\ref{app:sec:experiments}. For instance, under MGO with EDM-VP, Min--Max reduces $\delta$-PR from $2.2$ to $0.3$, but recall for the strongest subgroup drops from $91.7$ to $86.6$. Similarly, conditional training achieves strong $\delta$-FID (down to $0.08$) but can substantially increase the overall FID (from $2.34$ to $2.75$). Overall, EGT gains should therefore be interpreted as a redistribution of errors across groups rather than a uniform improvement for every subgroup.

%% file: tables/ffhq_vpve.tex
\begin{tabular}{l l r r r r}
\toprule
Model & Method & $\delta$-P & $\delta$-R & $\delta$-PR & $\delta$-FID \\
\midrule
\multirow{4}{*}{VP} & Pretrain & 2.0 & 4.2 & 2.2 & 0.34 \\
 & Conditional & \textbf{1.6} & 6.8 & 5.2 & \underline{\textbf{0.08}} \\
 & Reweighted & \textbf{1.8} & \textbf{2.3} & \textbf{0.6} & 0.44 \\
 & Min-Max & \underline{\textbf{0.1}} & \underline{\textbf{0.2}} & \underline{\textbf{0.3}} & \textbf{0.24} \\
\midrule
\multirow{4}{*}{VE} & Pretrain & 2.1 & 4.4 & 2.3 & 0.35 \\
 & Conditional & \textbf{0.9} & \textbf{2.7} & 3.5 & \textbf{0.11} \\
 & Reweighted & \textbf{0.8} & \textbf{3.0} & \textbf{2.2} & \underline{\textbf{0.04}} \\
 & Min-Max & \underline{\textbf{0.3}} & \underline{\textbf{1.1}} & \underline{\textbf{0.8}} & \textbf{0.33} \\
\bottomrule
\end{tabular}

%% file: tables/llms.tex
\begin{tabular}{l l r r r}
\toprule
Model & Method & $\delta$-P & $\delta$-R & $\delta$-PR \\
\midrule
\multirow{4}{*}{LLaMA-3.2-Chat 1B} & Pretrained & 21.18 & 11.46 & 32.65 \\
 & Conditional & \underline{\textbf{19.09}} & \textbf{7.81} & \underline{\textbf{26.90}} \\
 & RW & 21.29 & \textbf{9.22} & \textbf{30.50} \\
 & MinMax & \textbf{19.55} & \underline{\textbf{7.73}} & \textbf{27.28} \\
\midrule
\multirow{4}{*}{LLaMA-3.2-Chat 3B} & Pretrained & 25.37 & 15.06 & 40.43 \\
 & Conditional & \textbf{24.31} & \underline{\textbf{12.26}} & \textbf{36.57} \\
 & RW & \textbf{22.88} & \textbf{12.29} & \textbf{35.16} \\
 & MinMax & \underline{\textbf{22.27}} & \textbf{12.87} & \underline{\textbf{35.13}} \\
\midrule
\multirow{4}{*}{Gemma-3 4B (pt)} & Pretrained & 20.90 & 12.13 & 33.03 \\
 & Conditional & 23.17 & \underline{\textbf{11.08}} & 34.25 \\
 & RW & \textbf{19.67} & \textbf{11.71} & \textbf{31.38} \\
 & MinMax & \underline{\textbf{19.13}} & \textbf{11.67} & \underline{\textbf{30.79}} \\
\midrule
\multirow{4}{*}{Gemma-3 4B (it)} & Pretrained & 22.30 & 6.50 & 28.81 \\
 & Conditional & 24.16 & 8.41 & 32.57 \\
 & RW & \underline{\textbf{20.91}} & \underline{\textbf{6.20}} & \underline{\textbf{27.11}} \\
 & MinMax & \textbf{21.65} & \textbf{6.28} & \textbf{27.93} \\
\bottomrule
\end{tabular}

%% file: tables/correlations2.tex
\begin{tabular}{l r r r r r r r}
    \toprule
                 & $\delta$-MGO & $\delta$-EGO & $\delta$-P & $\delta$-R & $\delta$-PR & $\delta$-FID \\
    \midrule
    $\delta$-MGO & 1.00         & -1.00        & 0.04       & -0.10      & -0.04       & -0.20        \\
    $\delta$-EGO & -1.00        & 1.00         & -0.18      & -0.08      & -0.14       & 0.20         \\
    \bottomrule
\end{tabular}

%% file: content/conclusion.tex
\section{Concluding Remarks}


In this work, we demonstrated that existing proportion-based criteria for fairness in generative modeling are inherently brittle: even when perfectly satisfied, a model’s output quality can remain arbitrarily unbalanced across sensitive groups. To address this, we introduced equalized generative treatment (EGT), a fairness definition that enforces comparable $f$-divergences across groups, and studied its applicability. While $f$-divergences are among the most widely used metrics in generative models, alternatives such as the FID or integral probability metrics also exist and merit further investigation. In particular, it would be interesting to examine whether our analysis (especially results analogous to Theorems~\ref{th:failure} and~\ref{th:lowerbound}) remains valid in these alternative settings. Similarly, we expect that our results could extend to training schemes that directly minimize Wasserstein distances, such as Wasserstein GANs. We leave these explorations to future work.


%% file: content/appendix/statement.tex


%% file: content/appendix/mathsupp.tex
\section{Mathematical Supplementary Material}

\subsection{Useful Lemma on the Surjectivity of $\Df$ }

We begin with a statement that is of independent interest, as it establishes the surjectivity of the mapping $\mathcal{D}_f(R, \cdot)$ under suitable continuity assumptions on $f$. This result is stated in Lemma~\ref{lemma:surjectivity}. \medskip

\begin{lemma}
    \label{lemma:surjectivity}
    Let $f : [0, +\infty) \to ( - \infty, + \infty ]$ be a continuous function such that $\mathcal{D}_f$ defines an $f$-divergence. Then for any $R \in \PX$, the map $\mathcal{D}_f(R, \cdot) : \{ Q \in \PX \mid \Supp(R)=\Supp(Q) \} \to \R_+$ is surjective onto the interval $(0, f(0) + \bar{f}(+\infty))$.
\end{lemma}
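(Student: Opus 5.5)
The plan is to build an explicit one-parameter family of distributions with the same support as $R$, compute $\Df(R\|\cdot)$ along it in closed form, and conclude with the intermediate value theorem. The family reweights $R$ piecewise: we split $\Supp(R)$ into two pieces and move mass from one piece to the other.

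\textbf{Step 1 (splitting the support).} Since $R\in\PX$ is absolutely continuous with respect to $\lambda$, it has no atoms. Hence the map $t\mapsto R(\{x:\Vert x\Vert\le t\})$ is continuous and nondecreasing from $0$ to $1$. For every $\alpha\in(0,1)$ we can therefore find a measurable $A_\alpha\subset\Supp(R)$ with $R(A_\alpha)=\alpha$. Set $B_\alpha=\Supp(R)\setminus A_\alpha$, so that $R(B_\alpha)=1-\alpha$.

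\textbf{Step 2 (the family).} For $c\in(0,1]$, let $c_1=(1-c(1-\alpha))/\alpha$, which is $\ge 1>0$. Define $Q_{\alpha,c}$ by its density
\[
q_{\alpha,c}=c_1\,r\,\mathds{1}_{A_\alpha}+c\,r\,\mathds{1}_{B_\alpha}.
\]
This is a probability density, and $\Supp(Q_{\alpha,c})=\Supp(R)$ because $c_1,c>0$. The density ratio $r/q_{\alpha,c}$ is constant on each piece and there is no $\zeta$ term, so Definition~\ref{def:fdiv} gives
\[
\Df(R\|Q_{\alpha,c}) \;=\; \alpha c_1 f(1/c_1) + (1-\alpha)\, c\, f(1/c) \;=:\; g_\alpha(c).
\]
Continuity of $f$ on $(0,\infty)$ makes $g_\alpha$ continuous on $(0,1]$. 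At $c=1$ we have $c_1=1$ and $g_\alpha(1)=f(1)=0$.

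\textbf{Step 3 (the limit $c\to 0^+$).} As $c\to 0^+$ we get $1/c_1\to\alpha$ and $\alpha c_1\to 1$, so the first term tends to $f(\alpha)$. The second term, $(1-\alpha)\,c f(1/c)$, tends to $(1-\alpha)\bar f(\infty)$ by the definition of $\bar f(\infty)$; this also covers $\bar f(\infty)=+\infty$. Hence
\[
\lim_{c\to 0^+}g_\alpha(c)=f(\alpha)+(1-\alpha)\bar f(\infty).
\]
Letting $\alpha\to0^+$, continuity of $f$ at $0$ (as assumed on $[0,\infty)$, with the convention $f(0)=\lim_{t\to0^+}f(t)$) sends this limit to $f(0)+\bar f(\infty)$.

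\textbf{Step 4 (intermediate value theorem).} Fix $\epsilon\in(0,f(0)+\bar f(\infty))$. By Step 3 we can pick $\alpha$ with $f(\alpha)+(1-\alpha)\bar f(\infty)>\epsilon$. We then pick $c_0\in(0,1)$ with $g_\alpha(c_0)>\epsilon$. Since $g_\alpha$ is continuous on $[c_0,1]$ with $g_\alpha(1)=0<\epsilon<g_\alpha(c_0)$, there is $c$ with $g_\alpha(c)=\epsilon$. The corresponding $Q_{\alpha,c}$ has the required support and divergence $\epsilon$, which proves surjectivity onto the open interval.

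\textbf{Main obstacle.} The delicate part is the limit bookkeeping in Step 3 when $f(0)$ or $\bar f(\infty)$ is infinite. In that case the target interval is $(0,\infty)$, and we must check that $g_\alpha(c)$ is finite for every $c>0$ yet unbounded along the family. Finiteness holds because $f$ is only ever evaluated at the positive values $1/c_1$ and $1/c$, where it is finite. Unboundedness follows from the two limits computed above. The other point needing care is that only continuity on $(0,\infty)$ enters the intermediate value theorem, while continuity at $0$ is used solely for the $\alpha\to0$ limit.
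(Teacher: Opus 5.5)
Your proof is correct and is essentially the paper's argument. Your $Q_{\alpha,c}$ is exactly the paper's $Q^{\beta}_{1-\alpha}$ with $\beta = 1/(c(1-\alpha))$ and the paper's set $A_{1-\alpha}$ taken to be your $B_\alpha$, so you get the same closed-form divergence and the same intermediate-value conclusion between the value $0$ at $Q=R$ and the limit $f(0)+\bar f(\infty)$. The differences are only bookkeeping: you fix $\alpha$, apply the intermediate value theorem on the compact interval $[c_0,1]$ and take iterated limits ($c\to0^+$, then $\alpha\to0^+$) instead of the paper's joint limit over $(0,1)\times(1,+\infty)$; and you split the support with norm balls so the construction works in $\R^d$, where continuity of the ball mass in the radius comes from spheres being $\lambda$-null (i.e.\ from absolute continuity, not from atomlessness alone).
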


\begin{proof}
    Let $R \in \PX$, and let $f: [0, +\infty) \to ( - \infty, + \infty ] $ be a convex and continuous function with $f(1) = 0$. Let us also fix $\alpha \in (0,1)$ and $\beta \in (1,+\infty)$. Since $R$ is absolutely continuous with respect to the Lebesgue measure $\lambda$, its cumulative density function is also continuous on $\mathbb{R}$. Accordingly, by definition of the cumulative density function and by the intermediate value theorem, there exists $A_\alpha \in \mathcal{B}(\mathcal{X})$ such that $R(A_\alpha) = \alpha$. \medskip

    \emph{1) Construction of an auxiliary mapping $\phi$.} Let us denote by $r:=\frac{d R}{d\lambda}$ the probability density function of $R$ with respect to $\lambda$. Thanks to the above, we can define $Q_\alpha^\beta$ the probability distribution in $\PX$ that admits a probability density function $\frac{d Q_\alpha^\beta}{d\lambda} = q_\alpha^\beta$ defined for all $x \in \mathcal{X}$ as

    \begin{equation}
        q_\alpha^\beta(x) := \frac{1}{\beta} \left( \frac{1}{\alpha} r(x) \mathds{1}_{A_\alpha}(x) \right) + \left(1 - \frac{1}{\beta} \right) \left( \frac{1}{1 - \alpha} ~r(x) \mathds{1}_{\mathcal{X} \setminus A_\alpha}(x) \right). \label{eq:construction}
    \end{equation}

    By construction, since $\beta >1$ and $\alpha \in (0,1)$, $q_\alpha^\beta$ is a valid probability density function and $Q_\alpha^\beta \in \PX$. Furthermore, also by construction we have $\Supp(R)=\Supp(Q_\alpha^\beta)$ (see Appendix~\ref{sec:well-definiteness} for more details).
    Hence, using the alternative characterization of $f$-divergences in the special case of matching support (see e.g.~\citep[Chapter 7]{polyanskiy_information_2025}), we have
    \begin{align*}
        \mathcal{D}_f(R \| Q_\alpha^\beta) & = \int_{\mathcal{X}} f\left(\frac{r(x)}{q_\alpha^\beta(x)}\right) q_\alpha^\beta(x) ~d \lambda(x) \,\,\,\, \text{  with the convention $f(\tfrac{0}{0})\times0=0$.}
    \end{align*}
    Using the above, and by definition of $Q_\alpha^\beta$, we thus have
    \begin{align*}
        \mathcal{D}_f(R \| Q_\alpha^\beta) & = \int_{A_\alpha} f\left( \frac{r(x)}{\tfrac{r(x)}{\beta \alpha}} \right) \frac{r(x)}{\alpha \beta} ~d \lambda(x) + \int_{\mathcal{X} \setminus A_\alpha} f\left(\frac{r(x)}{r(x)\tfrac{\beta - 1}{(1-\alpha) \beta} }\right) \frac{\beta - 1}{\beta (1 - \alpha)} r(x) ~d \lambda(x) \\[2pt]
                                           & = \int_{A_\alpha} \frac{f\left(\beta \alpha\right) }{\alpha \beta} r(x) ~d \lambda(x) + \int_{\mathcal{X} \setminus A_\alpha} f\left(\frac{(1-\alpha) \beta}{\beta - 1}\right) \frac{\beta - 1}{\beta (1 - \alpha)} r(x) ~d \lambda(x).
    \end{align*}
    Furthermore, by linearity of the integral and by construction of $A_\alpha$, we have
    \begin{align}
        \mathcal{D}_f(R \| Q_\alpha^\beta)
         & =  \frac{1}{\alpha \beta} f\left(\beta \alpha\right) \int_{A_\alpha} r(x) ~d \lambda(x) +  \frac{\beta - 1}{\beta (1 - \alpha)} f\left(\frac{(1-\alpha) \beta}{\beta - 1}\right) \int_{\mathcal{X} \setminus A_\alpha} r(x) ~d \lambda(x) \notag \\[2pt]
         & = \frac{1}{\alpha \beta} f\left(\beta \alpha\right)  R\left( A_\alpha \right) + \frac{\beta - 1}{\beta (1 - \alpha)} f\left(\frac{(1-\alpha) \beta}{\beta - 1}\right) R \left(\mathcal{X} \setminus A_\alpha \right) \notag                      \\[2pt]
         & = \frac{1}{\beta} f\left(\beta \alpha\right)  +  \frac{\beta - 1}{\beta} f\left(\frac{(1-\alpha) \beta}{\beta - 1}\right). \label{eq:psi}
    \end{align}
    Since the $\alpha$, and $ \beta$ have been chosen arbitrarily. The above construction holds for any $(\alpha, \beta) \in (0,1)\times(1, +\infty)$. Thus, we can define $\phi: (0,1) \times (1, +\infty) \rightarrow \mathbb{R}_+$ the mapping such that \[ \phi(\alpha, \beta) := \mathcal{D}_f(R \| Q_\alpha^\beta) = \frac{1}{\beta} f\left(\beta \alpha\right)  +  \frac{\beta - 1}{\beta} f\left(\frac{(1-\alpha) \beta}{\beta - 1}\right),~ \forall (\alpha, \beta ) \in (0,1) \times (1, +\infty). \]

    As $\{ Q_\alpha^\beta \mid (\alpha, \beta) \in (0,1)\times(1, +\infty) \} \subseteq \{ Q \in \PX \mid \Supp(R) = \Supp(Q) \}$, to get the expected result it suffices to show that $\phi$ is surjective onto the interval $(0, f(0) + \bar{f}(+\infty))$.

    \emph{2) Studying the surjectivity of $\phi$.} As $f$ is a continuous function, by composition of continuous functions, $\phi$ is jointly continuous on $(0,1) \times (1, +\infty)$. Accordingly, still using the intermediate value theorem, $\phi$ is surjective onto $[\phi(\alpha_a,\beta_a),\phi(\alpha_b,\beta_b)]$ for any $\alpha_a,\alpha_b \in (0,1)$ and $\beta_a, \beta_b \in (1,+\infty)$. In particular, since this holds for any choice of $\alpha_a,\alpha_b,\beta_a,$ and $\beta_b$ we also have that $\phi$ is surjective onto $[\phi(1/2, 2), \lim_{\substack{\alpha \rightarrow 1 \\ \beta \rightarrow +\infty}}\phi(\alpha, \beta) )$. To conclude, we just need to compute each of these terms: \vspace{-5pt}
    \begin{itemize}
        \item $\phi(1/2, 2) = \frac{1}{2} f(1) + \frac{1}{2} f(1) = 0$, and
        \item $\lim\limits_{\substack{\alpha \rightarrow 1 \\ \beta \rightarrow +\infty}}\phi(\alpha, \beta) = \lim\limits_{\beta \rightarrow \infty} \frac{1}{\beta} f(\beta) + \lim\limits_{\beta \rightarrow +\infty} \frac{\beta -1}{\beta} f(0) = \bar{f}(+\infty) + f(0)$.
    \end{itemize}

    \emph{3) Conclusion.} By surjectivity of $\phi$ and by construction of $\{ Q_\alpha^\beta \mid (\alpha, \beta) \in (0,1)\times(1, +\infty) \}$, we just showed that for any $y \in (0,f(0)+\bar{f}(\infty))$, there exists $Q \in \{ Q \in \PX \mid \Supp(R)=\Supp(Q) \}$ such that $\Df(R,Q)=y$, which concludes the proof.
\end{proof}

\subsubsection{Additional sanity checks for the construction of $Q_\alpha^\beta$ in Lemma~\ref{lemma:surjectivity}}

\label{sec:well-definiteness}

\paragraph{Well-definiteness of the distribution.} Let us first show that for any $(\alpha, \beta) \in (0,1)\times(1,+\infty)$, $q^{\beta}_\alpha$ is a valid probability density function. For this, first note that the terms $1/\beta$, $1- 1/\beta$, $ 1/\alpha$ and $ 1/(1-\alpha)$ are all positive. Furthermore, $r$ is itself a probability density function by definition, hence non-negative. Hence, by construction $q^{\beta}_\alpha$ is a non-negative function. Also note that, for any $(\alpha, \beta) \in (0,1)\times(1,+\infty)$,  integrating against $\lambda$ over $\mathcal{X}$, we get

\begin{align*}
    \int_{\mathcal{X}} q_\alpha^\beta(x)~  d\lambda(x) & = \int_{\mathcal{X}} \frac{1}{\beta} \left( \frac{1}{\alpha} r(x) \mathds{1}_{A_\alpha}(x) \right) + \left(1 - \frac{1}{\beta} \right) \left( \frac{1}{1 - \alpha} ~r(x) \mathds{1}_{\mathcal{X} \setminus A_\alpha}(x) \right) d\lambda(x) \\[2pt]
                                                       & = \int_{A_\alpha} \frac{1}{\beta \alpha} r(x) d \lambda(x)+ \int_{\mathcal{X} \setminus A_\alpha}\left(1 - \frac{1}{\beta} \right) \frac{1}{1 - \alpha} ~r(x) d\lambda(x)                                                                   \\
    \intertext{Which by linearity of the integral and definition of $r$ and $A_\alpha$ gives}
                                                       & = \frac{1}{\beta \alpha} R(A_\alpha) + \left(1 - \frac{1}{\beta} \right) \frac{1}{1 - \alpha} R(\mathcal{X} \setminus A_\alpha) = \frac{1}{\beta } + \left(1 - \frac{1}{\beta} \right) = 1.
\end{align*}

\paragraph{Matching supports.} Let us now show that $\Supp(Q_\alpha^\beta) = \Supp(R)$. To do so let us first consider $x \notin \Supp(R)$, by definition of $q_\alpha^\beta$ we have
\[ q_\alpha^\beta(x) = \frac{1}{\beta \alpha} r(x)\mathds{1}_{A_\alpha}(x)  + \left(1 - \frac{1}{\beta} \right) \frac{1}{1 - \alpha} ~r(x) \mathds{1}_{\mathcal{X} \setminus A_\alpha}(x).\]
Since $x \notin \Supp(R)$, we have $r(x)=0$, hence $ q_\alpha^\beta(x) = 0$. Accordingly, $x \notin \Supp(Q_\alpha^\beta)$. This means by contrapositive that  $\Supp(Q_\alpha^\beta) \subset \Supp(R)$. Similarly, let $x \notin \Supp(Q_\alpha^\beta)$ we have
\[ q_\alpha^\beta(x) = \frac{1}{\beta \alpha} r(x) \mathds{1}_{A_\alpha}(x)  + \left(1 - \frac{1}{\beta} \right) \frac{1}{1 - \alpha} ~r(x) \mathds{1}_{\mathcal{X} \setminus A_\alpha}(x) = 0.\]
Since all terms $1/\beta$, $1- 1/\beta$, $ 1/\alpha$ and $ 1/(1-\alpha)$ are positive, this means that $r(x) = 0$. Hence $x \notin \Supp(R)$, which gives us $\Supp(R) \subset \Supp(Q_\alpha^\beta)$.



\subsection{Proof of Theorem 1}

Before proceeding to the proof of Theorem~1, we state a central lemma that decomposes the $f$-divergence between two measures $P$ and $Q$ in terms of a linear combination of the $f$-divergence between their marginals. The result is given in Lemma~\ref{lemma:decmpositiondf}

\subsubsection{Preliminary Lemma}

\begin{lemma}
    \label{lemma:decmpositiondf}
    Let $P\in \PX$ be a non-trivial target probability distribution, $Q \in \PX$, and $f$ be function such that $\mathcal{D}_f$ defines an $f$-divergence. If $P$ and $Q$ have matching generative odds, then
    \[ \Df(P \Vert Q) =  \sum_{a \in \mathcal{A}} \pi^Q_a \Df(P_a \Vert Q_a), \] where the decomposition according to $\mathcal{A}$ is as defined in Section~\ref{sec:fairness}.
\end{lemma}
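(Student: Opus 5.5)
The plan is to split the integral defining $\Df(P\Vert Q)$ along the partition $(\mathcal{X}_a)_{a\in\mathcal{A}}$ and rewrite each piece as a conditional divergence. MGO gives $\pi_a^Q=\pi_a^P$ for every $a$, so I write $\pi_a:=\pi_a^P=\pi_a^Q$. The condition that $P$ is non-trivial gives $\pi_a>0$, so both conditionals $P_a$ and $Q_a$ are well defined. On $\mathcal{X}_a$ we then have $p=\pi_a p_a$ and $q=\pi_a q_a$. The key point is that
\[
\frac{p(x)}{q(x)}=\frac{p_a(x)}{q_a(x)} \quad \text{for } x\in\mathcal{X}_a\cap\Supp(Q),
\]
because the common factor $\pi_a$ cancels. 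This cancellation is exactly where MGO is used.

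\textbf{Step 1 (integral part).} Since the $\mathcal{X}_a$ are disjoint and cover $\mathcal{X}$, we have $\Supp(Q)=\bigsqcup_a(\Supp(Q)\cap\mathcal{X}_a)$. Also $\Supp(Q)\cap\mathcal{X}_a=\Supp(Q_a)$, since $q_a=q\mathds{1}_{\mathcal{X}_a}/\pi_a$. Splitting the integral and substituting gives
\[
\int_{\Supp(Q)} f\!\left(\tfrac{p}{q}\right)q\,d\lambda=\sum_{a}\pi_a\int_{\Supp(Q_a)} f\!\left(\tfrac{p_a}{q_a}\right)q_a\,d\lambda .
\]
To justify splitting the integral, I note that convexity gives $f(t)\ge f(1)+c(t-1)$ for a subgradient $c$ at $1$. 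Hence the negative part of $f(p/q)q$ is dominated by $|c|(p+q)$, which is integrable. So each integral is well defined in $(-\infty,+\infty]$, and additivity over a countable (here finite) partition holds.

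\textbf{Step 2 (support term).} We have $P(\mathcal{X}\setminus\Supp(Q))=\sum_a P\bigl(\mathcal{X}_a\setminus\Supp(Q_a)\bigr)$. The conditional $P_a$ lives on $\mathcal{X}_a$, so $P_a(\mathcal{X}\setminus\Supp(Q_a))=P_a(\mathcal{X}_a\setminus\Supp(Q_a))$, and this equals $P(\mathcal{X}_a\setminus\Supp(Q_a))/\pi_a$. Therefore
\[
\zeta(\Supp(Q))=\sum_a\pi_a\,\bar f(\infty)\,P_a\bigl(\mathcal{X}\setminus\Supp(Q_a)\bigr)=\sum_a\pi_a\,\zeta_a(\Supp(Q_a)),
\]
where $\zeta_a$ denotes the support term computed with $P_a$ in place of $P$. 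When $\bar f(\infty)=\infty$, the convention $0\times\infty=0$ has to be checked term by term. The left side is finite iff every $P(\mathcal{X}_a\setminus\Supp(Q_a))=0$, which is iff every summand on the right is finite, and $\pi_a>0$ ensures the two sides agree.

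\textbf{Step 3.} Adding Steps 1 and 2 and regrouping by $a$ yields $\sum_a\pi^Q_a\Df(P_a\Vert Q_a)$, with both sides taking values in $(-\infty,+\infty]$. Regrouping is legitimate because every term is bounded below, so no $\infty-\infty$ issue arises.

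\textbf{Main obstacle.} The algebra is routine; the only delicate points are measure-theoretic bookkeeping. These are (i) that each conditional integral is well defined, so the sum may be split, and (ii) the case $\bar f(\infty)=+\infty$ with the $0\times\infty$ convention. I handle (i) with the subgradient lower bound and (ii) with the case analysis above.
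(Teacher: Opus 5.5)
Your proof is correct and follows essentially the same route as the paper. Like the paper, you split the integral along the partition $(\mathcal{X}_a)_{a\in\mathcal{A}}$, identify $\Supp(Q)\cap\mathcal{X}_a=\Supp(Q_a)$ using $\pi_a^Q=\pi_a^P>0$, cancel the common factor $\pi_a$ inside $f$, decompose the support term group by group, and regroup.

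Your integrability and $0\times\infty$ bookkeeping is a real improvement that the paper skips. One small fix: since $f$ may take the value $+\infty$, a finite subgradient at $1$ need not exist when $1$ is an endpoint of the domain of $f$. Any affine minorant $f(t)\ge\alpha+\beta t$, which every proper convex $f$ admits, gives the same domination of the negative part by $|\alpha|q+|\beta|p$.
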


\begin{proof} Let $P \in \PX$ be a target probability distribution and $f$ be a function such that $\mathcal{D}_f$ defines an $f$-divergence. Let $Q \in \PX$, such that $P$ and $Q$ have matching generative odds. By definition of the attribute mapping $\psi$, $\{ \mathcal{X}_a \mid a \in \mathcal{A} \}$ is a partition of $\mathcal{X}$. Hence, denoting $p = \frac{dP}{d\lambda}$ and $q = \frac{dQ}{d\lambda}$ the respective probability density functions of $P$ and $Q$, we have
    \begin{align*}
        \Df(P \Vert Q) & = \int_{\Supp(Q)} f\left(\frac{p(x)}{q(x)}\right) q(x) d \lambda(x) + \bar{f}(+\infty) P\left(\mathcal{X} \setminus \Supp(Q) \right)                                               \\[2pt]
                       & = \sum_{a \in \mathcal{A}} \int_{\mathcal{X}_a \cap \Supp(Q)} f\left(\frac{p(x)}{q(x)} \right) q(x) d \lambda(x) + \bar{f}(+\infty) P\left(\mathcal{X} \setminus \Supp(Q) \right).
    \end{align*}

    Note that, for any $a \in \mathcal{A}$, by definition we have
    \begin{align*}
        \mathcal{X}_a \cap \Supp(Q) := \{ x \in \mathcal{X}_a \mid q(x) > 0\}
        = \{ x \in \mathcal{X} \mid q(x)\times \mathds{1}_{\mathcal{X}_a}(x) > 0\}.
    \end{align*}
    Now recall that we assumed $\pi^P_a > 0$ for any $a \in \mathcal{A}$. Furthermore, $Q$ is assumed to match the odds of $P$, meaning that $\pi^Q_a = \pi^P_a > 0$ for all $a \in \mathcal{A}$. Hence we have
    \begin{align}
        \mathcal{X}_a \cap \Supp(Q) & = \{ x \in \mathcal{X} \mid q(x)\times \mathds{1}_{\mathcal{X}_a}(x) > 0\} \nonumber                                                           \\
                                    & = \{ x \in \mathcal{X} \mid \tfrac{q(x)}{\pi_a^Q}\times \mathds{1}_{\mathcal{X}_a}(x) := q_a(x) > 0\} =\Supp(Q_a).  \label{eq:Supportequality}
    \end{align}
    Using~(\ref{eq:Supportequality}) in the first decomposition of $\Df$, we get
    \begin{align*}
        \Df(P \Vert Q) = \sum_{a \in \mathcal{A}} \int_{\Supp(Q_a)} f\left(\frac{p(x)}{q(x)} \right) q(x) d \lambda(x) + \bar{f}(+\infty) P\left(\mathcal{X} \setminus \Supp(Q) \right).
    \end{align*}
    Now recall that we can always rewrite $p$ and $q$ as mixtures of attribute-conditional probability density functions $p = \sum_{a \in \mathcal{A}} \pi_a^P p_a$ and $q = \sum_{a \in \mathcal{A}} \pi_a^Q q_a$ where for any $a \in \mathcal{A}$, $q_a(x) = p_a(x) = 0$ for all $x \notin \mathcal{X}_a$. Accordingly, the $f$-divergence between $P$ and $Q$ can be rewritten as
    \begin{align}
        \label{eq:firstdecompDF}  \Df(P \Vert Q) = \sum_{a \in \mathcal{A}} \int_{\Supp(Q_a)} f\left(\frac{\pi_a^P p_a(x)}{\pi_a^Q q_a(x)} \right) \pi_a^Q q_a(x) d \lambda(x) + \bar{f}(+\infty) P\left(\mathcal{X} \setminus \Supp(Q) \right).
    \end{align}

    Note also that, by definition of the by definition of the attribute oracle, we have $P = \sum_{a \in \mathcal{A}} \pi_a^P P_a$ where $\Supp(P_a) \subset \mathcal{X}_a$. Using this we can rewrite $P\left(\mathcal{X} \setminus \Supp(Q) \right)$ as
    \begin{align*}
        P\left(\mathcal{X} \setminus \Supp(Q) \right) & = \sum_{a \in \mathcal{A}} \pi^P_a P_a(\mathcal{X} \setminus \Supp(Q)) = \sum_{a \in \mathcal{A}} \pi^P_a P_a(\mathcal{X}_a \setminus \Supp(Q)) \\
                                                      & = \sum_{a \in \mathcal{A}} \pi^P_a P_a(\mathcal{X}_a \setminus \left( \Supp(Q) \cap  \mathcal{X}_a \right) )                                    \\
                                                      & = \sum_{a \in \mathcal{A}} \pi^P_a P_a(\mathcal{X}_a \setminus \Supp(Q_a) ).
    \end{align*}
    Where the last line comes from using (\ref{eq:Supportequality}). Using the above in (\ref{eq:firstdecompDF}) we get
    \begin{align*}
        \Df(P \Vert Q) & = \sum_{a \in \mathcal{A}} \int_{\Supp(Q_a)} f\left(\frac{\pi_a^P p_a(x)}{\pi_a^Q q_a(x)} \right) \pi_a^Q q_a(x) d \lambda(x) + \bar{f}(+\infty) \sum_{a \in \mathcal{A}} \pi^P_a P_a\left( \mathcal{X}_a \setminus \Supp(Q_a) \right).
    \end{align*}
    Finally, Using the matching odds property ,i.e., the fact that $ \pi^P_a= \pi_a^Q$ for all $a \in \mathcal{A}$, we have
    \begin{align*}
        \Df(P \Vert Q) & = \sum_{a \in \mathcal{A}} \int_{\Supp(Q_a)} f\left(\frac{p_a(x)}{ q_a(x)} \right) \pi_a^Q q_a(x) d \lambda(x) + \bar{f}(+\infty) \sum_{a \in \mathcal{A}} \pi^Q_a P_a \left(\mathcal{X}_a \setminus  \Supp(Q_a) \right) \\[2pt]
                       & = \sum_{a \in \mathcal{A}} \pi^Q_a \left( \int_{\Supp(Q_a)} f\left(\frac{p_a(x)}{ q_a(x)} \right) q_a(x) d \lambda(x) + \bar{f}(+\infty) P_a\left(\mathcal{X}_a \setminus  \Supp(Q_a) \right) \right)                    \\
                       & =  \sum_{a \in \mathcal{A}} \pi^Q_a \Df(P_a \Vert Q_a), \text{  which concludes the proof.}
    \end{align*}

\end{proof}

\subsubsection{Proof of the Theorem}

We now turn to the proof of Theorem~1, restated below as Theorem~\ref{th:failure}. The argument relies primarily on Lemma~\ref{lemma:surjectivity} and Lemma~\ref{lemma:decmpositiondf}.

\begin{theorem}
    \label{th:failure}
    Let $P \in \PX$ be a target probability distribution satisfying equalized generative odds and $f$ be a continuous function such that $\mathcal{D}_f$ defines an $f$-divergence. For any $\epsilon \in (0, f(0) + \bar{f}(+\infty))$ and $\gamma \in (0,\epsilon)$, there exists $Q^{\gamma} \in \mathcal{S}_{\Df}(P,\epsilon)$ satisfying matching generative odds with $P$, but for which there exists $\bar{a} \in \mathcal{A}$ such that
    \[
        \mathcal{D}_f(P_{\bar{a}} \| Q^{\gamma}_{\bar{a}}) \geq \mathcal{D}_f(P_{a} \| Q^{\gamma}_{a}) + \gamma,~ \text{ for all } a \in \mathcal{A} \setminus \{ \bar{a}\}.
    \]
\end{theorem}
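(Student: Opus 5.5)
The plan is to reduce everything to choosing one number per sensitive group, using the two lemmas of this appendix. Write $k=|\mathcal{A}|$. Since $P$ satisfies equalized generative odds, $\pi_a^P=1/k$ for every $a$, and $P$ is non-trivial. I will look for $Q^\gamma$ of the form $Q^\gamma=\sum_{a}\tfrac1k Q_a$ with each $Q_a\in\mathcal{P}_\lambda(\mathcal{X}_a)$. Such a $Q^\gamma$ satisfies matching generative odds with $P$ by construction. Lemma~\ref{lemma:decmpositiondf} then gives
\[
\Df(P\|Q^\gamma)=\frac1k\sum_{a\in\mathcal{A}}\Df(P_a\|Q_a).
\]
The problem therefore becomes: find target values $(d_a)_{a\in\mathcal{A}}$, each realizable as some $\Df(P_a\|Q_a)$, whose average equals $\epsilon$ and whose largest entry exceeds every other entry by at least $\gamma$.

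\emph{Realizability.} I apply Lemma~\ref{lemma:surjectivity} with $R=P_a$, which lies in $\PX$ after extending by zero outside $\mathcal{X}_a$. Every value in $(0,M)$, where $M:=f(0)+\bar f(+\infty)$, is then attained by some $Q_a$ with $\Supp(Q_a)=\Supp(P_a)\subset\mathcal{X}_a$. In addition, the value $0$ is attained by $Q_a=P_a$. If the endpoint $M$ is needed, it should be attainable by giving $Q_a$ support inside $\mathcal{X}_a$ disjoint from $\Supp(P_a)$, since the integral term then vanishes and only $\bar f(\infty)+f(0)$-type contributions remain. I would verify this separately, because Lemma~\ref{lemma:surjectivity} only covers the open interval.

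\emph{Choice of values.} Fix a group $\bar a$. Set $d_a=s$ for all $a\neq\bar a$ and $d_{\bar a}=t:=k\epsilon-(k-1)s$, so the average is exactly $\epsilon$. The gap is $t-s=k(\epsilon-s)$, so the conclusion requires $s\le\epsilon-\gamma/k$. Realizability requires $0\le s$ and $t\le M$, that is, $s\ge (k\epsilon-M)/(k-1)$.
\begin{itemize}
\item If $k\epsilon<M$, take $s=0$, so that $Q_a=P_a$ for all $a\neq\bar a$ and $t=k\epsilon$. The gap is then $k\epsilon>\epsilon>\gamma$, and the proof is complete in this regime.
\item Otherwise, choose $s$ in the interval $\bigl[(k\epsilon-M)/(k-1),\,\epsilon-\gamma/k\bigr]$. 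Each resulting $d_a$ is realized by the lemma above, and $Q^\gamma$ is assembled from the corresponding $Q_a$.
\end{itemize}

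\emph{Main obstacle.} The hard step is the second regime, $k\epsilon\ge M$, where the interval above must be nonempty. Nonemptiness is equivalent to $\epsilon+\gamma(k-1)/k\le M$. This holds automatically when $M=+\infty$, when $\gamma$ is small, or when $\epsilon\le M/2$ (since $\gamma<\epsilon$). It is not implied by $\gamma<\epsilon<M$ alone. Moreover, because every conditional divergence is at most $M$, the gap is bounded above by $k(M-\epsilon)/(k-1)$. So in the corner where $\epsilon$ and $\gamma$ are both close to $M$, no matching-generative-odds $Q$ can satisfy the conclusion.

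To handle this exactly as worded, I would first try to show that $M$ itself is attainable via disjoint supports, which permits the closed endpoint in the interval. I would then check whether this, together with the case split above, covers the required range. If it does not, the residual corner $\epsilon+\gamma(k-1)/k>M$ is a genuine limitation of the statement. The honest conclusion would then be the theorem under that compatibility condition, which is automatic for unbounded divergences such as KL.
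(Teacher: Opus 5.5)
Your construction is the paper's own. The paper also takes $Q^\gamma=\frac{1}{|\mathcal{A}|}\sum_{a}Q^\gamma_a$, realizes the conditional divergences with Lemma~\ref{lemma:surjectivity}, and gets $\Df(P\|Q^\gamma)=\epsilon$ from Lemma~\ref{lemma:decmpositiondf}. It uses the single choice $s=\epsilon-\gamma/k$, $t=\epsilon+\gamma(k-1)/k$, so the gap is exactly $\gamma$.

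The obstruction you found is genuine, and the paper's proof overlooks it. It applies Lemma~\ref{lemma:surjectivity} at the value $\epsilon+\gamma(k-1)/k$, but that lemma only covers values strictly below $M=f(0)+\bar f(+\infty)$. Your necessity argument is correct. Summing $d_{\bar a}\ge d_a+\gamma$ over $a\neq\bar a$ and using $\sum_a d_a=k\epsilon$ gives $d_{\bar a}\ge \epsilon+\gamma(k-1)/k$. Convexity gives $f(t)\le f(0)+t\,\bar f(+\infty)$, hence $d_{\bar a}\le M$. So for finite $M$ and $k\ge 2$ the theorem as worded is false, for example with $k=2$, $\epsilon=0.9M$, $\gamma=0.8M$.

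Two refinements follow.

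\emph{No case split is needed.} Under the strict condition $\epsilon+\gamma(k-1)/k<M$, the paper's choice works directly. Your first regime is subsumed, since $k\epsilon<M$ already implies $\epsilon+\gamma(k-1)/k<\epsilon(2k-1)/k\le k\epsilon<M$.

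\emph{The endpoint is not generally attainable.} Since $M>0$ forces $f$ to be non-affine, $\Df(P_{\bar a}\|Q_{\bar a})=M$ holds only if $Q_{\bar a}\perp P_{\bar a}$. For $Q_{\bar a}\in\mathcal{P}_\lambda(\mathcal{X}_{\bar a})$ this requires $\lambda(\mathcal{X}_{\bar a}\setminus\Supp(P_{\bar a}))>0$. That fails, for instance, in the paper's toy example, where each truncated Gaussian has full support on its half-line.

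The clean corrected statement therefore adds the hypothesis $\epsilon+\gamma(|\mathcal{A}|-1)/|\mathcal{A}|<f(0)+\bar f(+\infty)$. This holds automatically when the right-hand side is infinite (e.g.\ KL) or when $\epsilon\le (f(0)+\bar f(+\infty))/2$.
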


\begin{proof}
    Let $P \in \PX$ be a target probability distribution satisfying equalized generative odds and $f$ be a continuous function such that $\mathcal{D}_f$ defines an $f$-divergence. By definition of the attribute mapping $\psi$, $\{ \mathcal{X}_a \mid a \in \mathcal{A} \}$ is a partition of $\mathcal{X}$. Hence we can rewrite $P = \sum_{a \in \mathcal{A}} \pi_a^P P_a$ with $\Supp(P_a) \subset \mathcal{X}_a$. Let us now fix $\epsilon \in (0, f(0) + \bar{f}(+\infty))$, $\gamma \in (0,\epsilon)$ and let $\bar{a} \in \mathcal{A}$. Since $f$ is continuous, by Lemma~\ref{lemma:surjectivity}, we know there exists $Q_{\bar{a}}^\gamma \in \PX$ such that $\Supp(P_{\bar{a}})=\Supp(Q_{\bar{a}}^\gamma)$ and $\Df(P_{\bar{a}} \Vert Q_{\bar{a}}^\gamma) = \epsilon + \gamma \frac{ \vert \mathcal{A} \vert - 1}{\vert \mathcal{A} \vert}$. Similarly, for any $a \in \mathcal{A}\setminus \{ \bar{a}\}$ there exist a distribution $Q_a^\gamma \in \PX$ such that $\Supp(P_{a})=\Supp(Q_{a}^\gamma)$ and $\Df(P_{a} \Vert Q_{a}^\gamma) = \epsilon - \frac{\gamma}{\vert \mathcal{A} \vert}$. Using these, we define $Q^\gamma$ as \[Q^\gamma = \frac{1}{\vert \mathcal{A} \vert} \sum\limits_{a \in \mathcal{A}} Q_a^\gamma.\]



    Let us now consider $Q^\gamma$ as defined above. Recall that the target distribution $P$ is assumed to satisfy equalized odds, hence $\pi^{P}_a = \tfrac{1}{\vert \mathcal{A} \vert}$ for all $a \in \mathcal{A}$. Furthermore, by definition, $\pi^{Q^\gamma}_a = \tfrac{1}{\vert \mathcal{A} \vert} = \pi^{P}_a$ for all $a \in \mathcal{A}$, which means that $Q^\gamma$ satisfies matching odds with $P$. Hence, using Lemma~\ref{lemma:decmpositiondf}, we have
    \[ \Df(P \Vert Q^\gamma) =  \sum_{a \in \mathcal{A}} \pi^{Q_a^\gamma} \Df(P_a \Vert Q_a^\gamma) = \frac{1}{\vert \mathcal{A} \vert} \sum_{a \in \mathcal{A}} \Df(P_a \Vert Q_a^\gamma) = \epsilon \]
    Nevertheless, by construction, we also have \[ \Df(P_{\bar{a}} \Vert Q_{\bar{a}}) \geq \Df(P_{a} \Vert Q_{a}) + \gamma. \]
    Hence, by construction, we just designed a probability distribution $Q^\gamma$ satisfying the conditions of Theorem~\ref{th:failure}.
\end{proof}

\subsection{Proof of Theorem 2}

We now present the proof of Theorem~2, restated below as Theorem~\ref{th:lowerbound}. The proof relies primarily on Lemma~\ref{lemma:decmpositiondf} and the notion of conditional closure. 

\begin{theorem}
\label{th:lowerbound}
    Let $P \in \PX$ be a target probability distribution and $f$ be a function such that $\mathcal{D}_f$ defines an $f$-divergence. Let $\mathcal{Q}$ be set of candidate probability distributions satisfying matching odds with respect to $P$ and such that there exists $Q^\star \in \argmin_{ Q \in \overline{\mathcal{Q}}_\mathcal{A}} \Df\left( P \| Q \right).$  Then for any $\delta >0 $, if $Q \in \mathcal{Q}$ satisfies $\delta$-equalized generative treatment of $P$ with respect to $\Df$, then
    \[ \Df(P \| Q) \geq  \max_{a \in \mathcal{A} } \Df(P_a \| Q_a^\star) - \delta. \]
\end{theorem}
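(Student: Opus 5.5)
We plan to combine Lemma~\ref{lemma:decmpositiondf} with the observation that the conditional closure decouples the groups. Since every $Q\in\mathcal{Q}$ satisfies MGO with $P$, every element of $\Delta_{\mathcal{Q}}$ equals $(\pi_a^P)_{a\in\mathcal{A}}$. Hence every element of $\overline{\mathcal{Q}}_\mathcal{A}$ has the form $\sum_a \pi_a^P R_a$ with $R_a\in\mathcal{Q}_a$, and also satisfies MGO with $P$. Applying Lemma~\ref{lemma:decmpositiondf} to any such mixture gives
\[
\Df\Bigl(P \,\Big\|\, \sum_a \pi^P_a R_a\Bigr) = \sum_{a\in\mathcal{A}} \pi_a^P\, \Df(P_a\|R_a).
\]
Because the $R_a$ can be chosen independently and each weight $\pi_a^P$ is positive ($P$ is non-trivial), minimizing over $\overline{\mathcal{Q}}_\mathcal{A}$ amounts to minimizing each term separately. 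We will show that a minimizer $Q^\star$ must satisfy $\Df(P_a\|Q^\star_a)\le \Df(P_a\|R)$ for every $R\in\mathcal{Q}_a$. Indeed, if some $R$ gave a strictly smaller value for one group, swapping $Q^\star_a$ for $R$ would keep the mixture inside $\overline{\mathcal{Q}}_\mathcal{A}$ and strictly decrease the objective, contradicting optimality. In this step we must check that $Q^\star_a$, the conditional of the mixture, is exactly $R_a$; this holds because the groups live on the disjoint sets $\mathcal{X}_a$ and $\pi_a^P>0$.

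Next, take $Q\in\mathcal{Q}$ satisfying $\delta$-EGT. For each $a$, the conditional $Q_a$ lies in $\mathcal{Q}_a$ by definition, so the previous step gives $\Df(P_a\|Q_a)\ge \Df(P_a\|Q_a^\star)$. Let $\bar a\in\argmax_a \Df(P_a\|Q^\star_a)$; the maximum exists since $\mathcal{A}$ is finite. By $\delta$-EGT, for every $a$,
\[
\Df(P_a\|Q_a)\ \ge\ \Df(P_{\bar a}\|Q_{\bar a}) - \delta\ \ge\ \Df(P_{\bar a}\|Q^\star_{\bar a}) - \delta = \max_{a'} \Df(P_{a'}\|Q^\star_{a'})-\delta .
\]

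Finally, $Q$ itself satisfies MGO, so Lemma~\ref{lemma:decmpositiondf} gives $\Df(P\|Q)=\sum_a \pi_a^Q \Df(P_a\|Q_a)$. This is a convex combination, since the weights $\pi_a^Q=\pi_a^P$ sum to one, of quantities each bounded below by $\max_{a'}\Df(P_{a'}\|Q^\star_{a'})-\delta$, which yields the claim.

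The main obstacle is the first step: justifying rigorously that a global minimizer over $\overline{\mathcal{Q}}_\mathcal{A}$ is groupwise optimal. The argument relies on three facts: $\Delta_{\mathcal{Q}}$ collapses to the single point $(\pi_a^P)_a$, the decomposition lemma applies to closure elements, and the swap is admissible. A further subtlety is infinite divergence values. If some term is $+\infty$, the swap argument needs care; we would note that the inequality is trivial when $\Df(P\|Q)=+\infty$, and otherwise every term is finite.
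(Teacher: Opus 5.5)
Your proof is correct and is essentially the paper's argument. The paper argues by contradiction: it first averages the $\delta$-EGT inequality at $a_\# \in \argmax_a \Df(P_a\|Q^\star_a)$ using Lemma~\ref{lemma:decmpositiondf}, then swaps $Q_{a_\#}$ into $Q^\star$ to contradict minimality, whereas you run the same swap directly (as groupwise optimality of $Q^\star$) before averaging. Your treatment of infinite values also covers a point the paper's strict-inequality chain passes over silently; it works because $\mathcal{Q}\subseteq\overline{\mathcal{Q}}_\mathcal{A}$, so $\Df(P\|Q^\star)\le\Df(P\|Q)$.
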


\begin{proof}
    Let $P \in \PX$ be a target probability distribution satisfying equalized generative odds and $f$ be a function such that $\mathcal{D}_f$ defines an $f$-divergence. By definition of the attribute mapping $\psi$, $\{ \mathcal{X}_a \mid a \in \mathcal{A} \}$ is a partition of $\mathcal{X}$. Hence we can rewrite $P = \sum_{a \in \mathcal{A}} \pi_a^P P_a$ with $\Supp(P_a) \subset \mathcal{X}_a$. Let us now consider $Q \in \mathcal{Q}$.  As all probability distributions in $\mathcal{Q}$ satisfy matching odds with respect to $P$, we use Lemma~\ref{lemma:decmpositiondf} to write
    \begin{align}
        \Df(P \Vert Q) = \sum_{a \in \mathcal{A}} \pi_a^{Q} \Df(P_a,Q_a).
        \label{eq:decompositiondivergence}
    \end{align}

    \emph{Reasoning ab absurdum.} Based on the above, we reason ab absurdum to obtain the expected result. Let us take $a_{\#} \in \argmax\limits_{a \in \mathcal{A}} \Df(P_a \Vert Q_a^\star)$, where $Q^\star \in  \argmin\limits_{ Q \in \overline{\mathcal{Q}}_\mathcal{A}} \Df\left( P \| Q \right)$, and assume that \begin{align}
        \label{eq:HypothesisAbsurdum}
        \Df(P \Vert Q) < \Df\left( P_{a_{\#}} \Vert Q^\star_{a_{\#}} \right)  - \delta.
    \end{align}
    Note that, by definition of $\delta$-equalized generative treatment, we know that
    \[ \Df\left( P_{a_{\#}} \Vert Q_{a_{\#}} \right) - \Df\left( P_{a} \Vert Q_{a} \right) \leq \delta,~ \forall a \in \mathcal{A}. \]

    Hence,  we also have $\Df\left( P_{a_{\#}} \Vert Q_{a_{\#}} \right) \leq \Df\left( P_{a} \Vert Q_{a} \right) + \delta,~ \forall a \in \mathcal{A}$. Multiplying both sides by $\pi_a^{Q}$ for all $a \in \mathcal{A}$ and summing the terms one gets

    \[  \sum_{a \in \mathcal{A}} \pi_a^{Q} \Df\left( P_{a_{\#}} \Vert Q_{a_{\#}} \right) \leq \sum_{a \in \mathcal{A}} \pi_a^{Q} \left( \Df\left( P_{a} \Vert Q_{a} \right) + \delta \right). \]
    Finally, using the fact that $\sum_{a \in \mathcal{A}} \pi_a^{Q} = 1$ and the decomposition in~(\ref{eq:decompositiondivergence}), we obtain
    \begin{align}
        \label{eq:absurdum2}
        \Df\left( P_{a_{\#}} \Vert Q_{a_{\#}} \right) \leq \Df\left( P \Vert Q \right) + \delta.
    \end{align}
    Now, let us consider the distribution $\tilde{Q} \in \PX$ defined as \[\tilde{Q} = \sum_{\substack{a \in \mathcal{A} \\ a \neq a_{\#}}} \pi_a^{Q^\star} Q_a^\star + \pi_{a_{\#}}^{Q^\star} Q_{a_{\#}}.\] By definition of $\overline{\mathcal{Q}}^{\mathcal{A}}$, we have that $\tilde{Q} \in \overline{\mathcal{Q}}^{\mathcal{A}}$ and that $Q^\star$ satisfies matching generative odds with respect to $P$.
    Hence, $\tilde{Q}$ also satisfies matching generative odds with respect to $P$, which gives us (using Lemma~\ref{lemma:decmpositiondf})
    \[ \Df\left( P \Vert \tilde{Q} \right) = \sum_{ \substack{a \in \mathcal{A} \\ a \neq a_{\#}}} \pi_a^{Q^\star} \Df(P_a \Vert Q_a^\star) + \pi_{a_{\#}}^{Q^\star} \Df(P_{a_{\#}}\Vert Q_{a_{\#}}).\]
    Substituting successively~(\ref{eq:absurdum2}) and~(\ref{eq:HypothesisAbsurdum}) in the above, and using the fact that $Q^\star$ satisfies  matching generative odds with respect to $P$ (to obtain the last equality), we have
    \begin{align*}
        \Df\left( P \Vert \tilde{Q} \right) & \leq \sum_{ \substack{a \in \mathcal{A}                                                                   \\ a \neq a_{\#}}} \pi_a^{Q^\star} \Df(P_a \Vert Q_a^\star) + \pi_{a_{\#}}^{Q^\star} \left( \Df(P \Vert Q) + \delta \right) \\
                                            & < \sum_{ \substack{a \in \mathcal{A}                                                                      \\ a \neq a_{\#}}} \pi_a^{Q^\star} \Df(P_a \Vert Q_a^\star) + \pi_{a_{\#}}^{Q^\star} \Df\left( P_{a_{\#}} \Vert Q^\star_{a_{\#}} \right) \\
                                            & < \sum_{a \in \mathcal{A}} \pi_a^{Q^\star} \Df(P_a \Vert Q_a^\star) =  \Df\left( P \Vert Q^\star \right).
    \end{align*}
    According to the above, we just constructed a probability distribution $\tilde{Q} \in \overline{\mathcal{Q}}^{\mathcal{A}}$ that has an $f$-divergence strictly smaller than $Q^\star$ with respect to $P$. By definition of $Q^\star$, this is impossible, hence contradicting our initial assumption that $\Df(P \Vert Q) < \Df\left( P_{a_{\#}} \Vert Q^\star_{a_{\#}} \right)  - \delta.$ In particular, this means that \[ \Df(P \| Q) \geq  \max_{a \in \mathcal{A} } \Df(P_a \| Q_a^\star) - \delta. \]
\end{proof}

%% file: content/appendix/experiment.tex
\section{Experimental Setup}
\label{app:sec:experiments}

\subsection{Building Illustrative Examples}
\label{app:sec:experiments:cata}

\begin{figure}[b!]
    \begin{center}
        \subfloat[Example 1]{\includegraphics[width=0.47\textwidth]{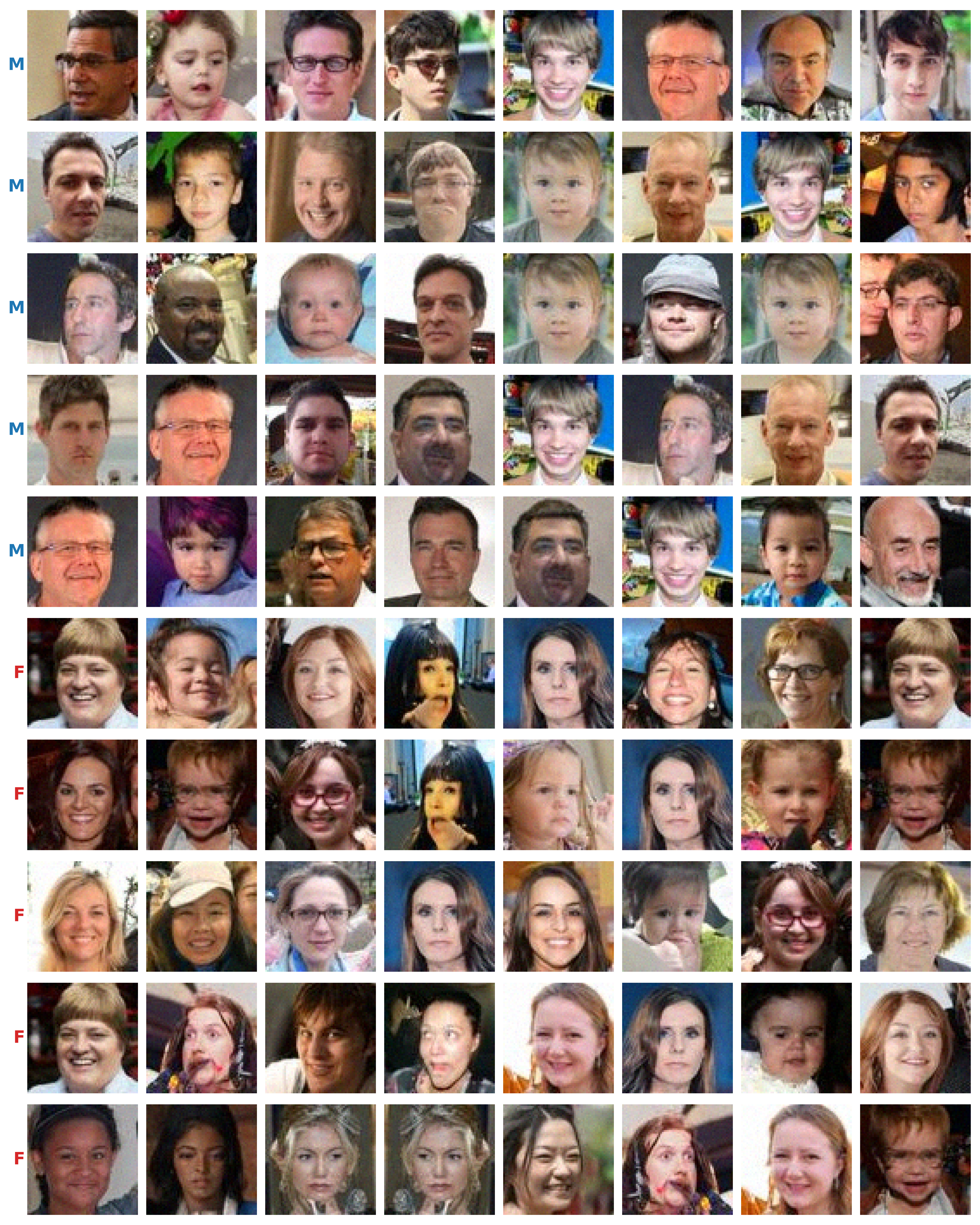}\label{app:fig:cata:all}}\hspace{0.05\textwidth}%
        \subfloat[Example 2]{\includegraphics[width=0.47\textwidth]{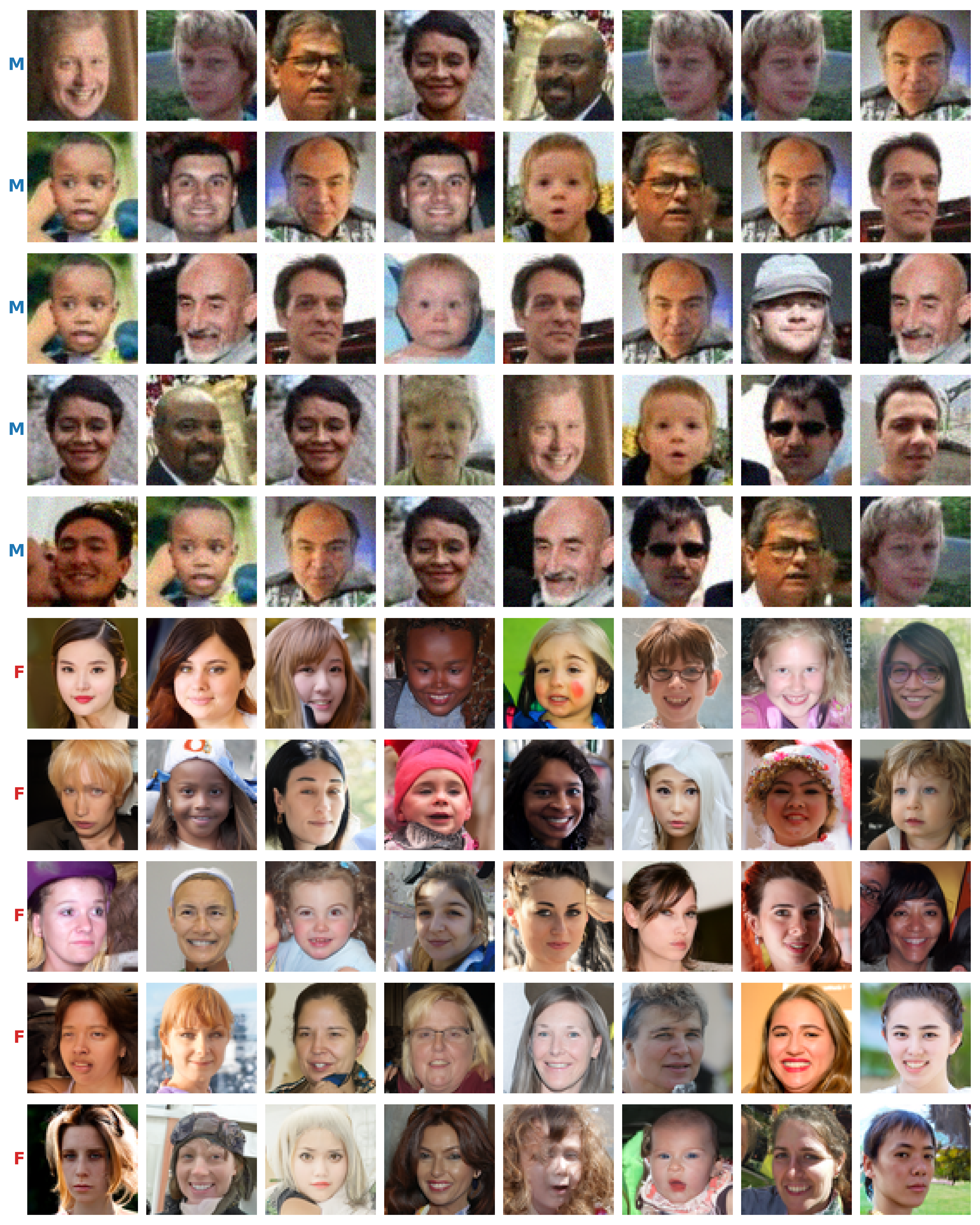}\label{app:fig:cata:male}}%
        \caption{Samples from distribution of FFHQ faces with similar precision ($79.06$ (\ref{app:fig:cata:all}) and  $77.12$ (\ref{app:fig:cata:male})) and similar recall $R$ ($60.02$ (\ref{app:fig:cata:all}) and $60.31$ (\ref{app:fig:cata:male})). However, the precision and recall for the sub-group are very different. In particular, in Example 1 the models generates slighly noised images for both classes. In Example 2, the model generates limited diversity and noisy images for Male Class while it generates high quality images and diverse samples for Female.}
        \label{app:fig:cata}
    \end{center}
\end{figure}
In this section, we provide an illustrative examples. Two synthetic scenarios are constructed to highlight limitations of proportion-based fairness constraints when subgroup generation quality differs. Both examples are based on FFHQ~\citep{karras_style-based_2019}, which provides face images with gender metadata. Samples are generated at resolution $64\times 64$ using a pretrained unconditional EDM-VP model~\citep{karras_elucidating_2022}. To enforce MGO, the generated set is first filtered so that male/female proportions match the desired ratio. Disparities are then introduced \emph{post hoc} in a controlled way.

\textbf{Quality gap (precision).} Random Gaussian noise is added to one group (or both) to create a difference in fidelity. In Example~1, mild noise is applied to both groups, leading to a moderate precision drop for both. In Example~2, strong noise is applied to male images while female images remain unchanged, producing a large precision gap.

\textbf{Diversity gap (recall).} To reduce diversity, a subset of generated images is duplicated and randomly horizontally flipped. In Example~1, $20\%$ of images are duplicated in both groups, yielding a mild recall drop for both. In Example~2, only male images are duplicated, creating a large recall gap.

Final global and subgroup precision/recall values are reported in Table~\ref{app:tab:cata}. Although the two examples have similar global P\&R, subgroup metrics differ substantially, illustrating that matching proportions (MGO/EGO) can hide large disparities in conditional generation quality.

\begin{table}[t]
    \centering
    \caption{precision and recall for the two examples shown in Figure~\ref{app:fig:cata}. The global precision and recall are similar, but the subgroup metrics differ significantly.}
    \label{app:tab:cata}
    \begin{tabular}{lcccccc}
        \toprule
        Subset    & Precision & Recall & M Recall & F Recall & M Precision & F Precision \\
        \midrule
        Example 1 & 79.06     & 60.02  & 47.96    & 45.01    & 73.96       & 71.88       \\
        Example 2 & 77.12     & 60.31  & 1.52     & 91.22    & 66.53       & 95.85       \\
        \bottomrule
    \end{tabular}
\end{table}

\subsection{Related works and our method}

In this work, several baselines are considered to contrast proportion matching (MGO/EGO) with our treatment-based criterion (EGT) . We briefly review them here and in Table~\ref{tab:methods} summarize their properties.

\textbf{Standard unconditional} training optimizes the model on the full dataset mixture. As a result, the generated attribute proportions typically mirror the dataset proportions in expectation, but there is no mechanism to reliably \emph{enforce} target proportions at inference time, nor to prevent conditional quality gaps.

\textbf{Conditional} models explicitly learn attribute-conditional distributions. By sampling attributes in prescribed ratios at generation time, both MGO and EGO can be enforced by construction.

\textbf{Rejection sampling (RS)} enforces target proportions at inference time by filtering generated samples until the desired ratios are reached \citep{zameshina_fairness_2022}. This requires an oracle classifier (specified in the diffusion/LLM sections) and can be computationally expensive. For Wikipedia Biographies, pretrained LLMs substantially under-generate female biographies (down to about $16\%$), so achieving EGO may require generating more than $5\times$ as many samples. For this reason, RS is only applied to diffusion models in our experiments.

\begin{table}[!b]
    \centering
    \caption{Methods for generative fairness. The table indicates whether the method applies at training or inference time, and whether it can enforce Matching Generative Odds (MGO) and/or Equalized Generative Odds (EGO), and Equalized Generative Treatment (EGT).}
    \label{tab:methods}
    \begin{tabular}{l|c|cccc}
        Method                                             & Applies on & MGO    & EGO    & EGT    & \\\hline
        Standard Unconditional                             & Training   & \xmark & \xmark & \xmark   \\
        Rejection Sampling~\citep{zameshina_fairness_2022} & Inference  & \cmark & \cmark & \xmark   \\
        Standard Conditional                               & Training   & \cmark & \cmark & \xmark   \\
        Reweighted Loss~\citep{choi_fair_2020}             & Training   & \xmark & \cmark & \xmark   \\
        Min-Max                                     & Training   & \xmark & \xmark & \cmark  \\
        Min-Max + Rejection Sampling                                     & Training   & \cmark & \cmark & \cmark  \\
    \end{tabular}
    \centering
\end{table}

\textbf{Reweighted Loss} \citep{choi_fair_2020} rescales per-sample losses with group-dependent weights. Concretely, losses from group $a$ are multiplied by a factor proportional to $\pi_a^P/\pi_a^Q$, where $\pi^Q$ denotes the training mixture proportions and $\pi^P$ is the target mixture (e.g., uniform for balancing). This encourages the optimizer to allocate more capacity to underrepresented groups and often improves proportion-based criteria, but it does not provide guarantees for EGT. For Diffusion models, we use a rescaling per noise level, similarly to the work of \citet{kim_training_2024}.

\subsection{Diffusion Models}
\label{app:sec:experiments:diff}

\subsubsection{Experimental Set-Up}
We use the publicly available codebase of \citet{karras_elucidating_2022} to train and evaluate diffusion models. We experiment with two different architectures: EDM-VP and EDM-VE. Both architectures are based on a U-Net architecture with attention layers, but they differ in the noise schedule and the parameterization of the denoising model. We refer the reader to \citet{karras_elucidating_2022} for more details on the architectures. We use the pre-trained models provided by \citet{karras_elucidating_2022} as our baseline models for EDM-VP and EDM-VE on FFHQ in our experiments.

\subsubsection{Training Diffusion Models}
\label{app:sec:experiments:diff:train}

Training diffusion models consists in defining a sequence of noise levels $\sigma$ and minimizing a weighted sum of denoising objectives. Concretely, we introduce a denoising model $F_\theta(x,\sigma)$ and optimize
\begin{equation}
    \mathcal{L}_{\mathrm{Unconditional}}(\theta) = \E_{x\sim P, \sigma}\left[ w(\sigma)\, \|F_\theta(x,\sigma) - x\|^2\right],
\end{equation}
where $w(\sigma)$ are predefined weights. This objective encourages the model to predict the clean signal from its noisy counterpart at various noise levels. In conditional training, the same principle applies, but the denoising function is extended to incorporate the group attribute using the oracle function:
\begin{equation}
    \mathcal{L}_{\mathrm{Conditional}}(\theta) = \E_{x\sim P, \sigma}\left[  w(\sigma) \, \|F_\theta(x,\psi(x),\sigma) - x\|^2\right].
\end{equation}
This setup allows explicit conditioning on sensitive attributes. In practice, both unconditional and conditional models have a similar number of parameters, with conditional models requiring slightly more due to the embedding of attribute information.

To adapt Min--Max training to diffusion models, the maximization step must account for the multi-noise nature of the objective. Indeed, a large loss at a given noise level can translate into different failure modes in the final samples, potentially affecting either fidelity (precision) or coverage (recall). For this reason, the worst-group selection is performed \emph{separately at each noise level}, and the model is trained to reduce the loss of the underperforming group across the entire diffusion trajectory. This procedure is summarized in Algorithm~\ref{alg:minmax-diffusion-noise}.

\begin{algorithm}[H]
    \caption{Min--max diffusion training across noise levels}
    \label{alg:minmax-diffusion-noise}
    \begin{algorithmic}[1]
        \STATE \textbf{Initialize} model parameters $\theta$ of $F_{\theta}$
        \STATE \textbf{Initialize} moving averages $\{L_{a,\sigma}\}_{a\in\mathcal{A},\,\sigma\in\Sigma}$ (e.g., $L_{a,\sigma}\gets 0$)
        \FOR{each training iteration}
        \STATE Sample a minibatch $\mathcal{B}$ of data points $x$ with group labels $a\in\mathcal{A}$ and noise levels $\sigma\in\Sigma$
        \STATE Compute per-group, per-noise losses $\ell_{a,\sigma} = \|F_{\theta}(x,a,\sigma) - x\|_2^2$ on $\mathcal{B}$
        \FORALL{$(a,\sigma) \in \mathcal{A}\times\Sigma$}
        \STATE $L_{a,\sigma} \leftarrow \alpha L_{a,\sigma} + (1-\alpha)\,\ell_{a,\sigma}$
        \ENDFOR
        \FORALL{$\sigma \in \Sigma$}
        \STATE $a^{\star}(\sigma) \leftarrow \arg\max_{a \in \mathcal{A}} L_{a,\sigma}$
        \ENDFOR
        \STATE $\mathcal{L} \leftarrow \sum_{\sigma \in \Sigma} \ell_{a^{\star}(\sigma),\,\sigma}$
        \STATE Compute gradient $g \leftarrow \nabla_{\theta}\,\mathcal{L}$
        \STATE Update parameters $\theta \leftarrow \textsc{OptimizerStep}(\theta, g)$
        \ENDFOR
    \end{algorithmic}
\end{algorithm}
 
As training relies on mini-batch SGD, the number of samples from an underperforming sensitive group within a batch can be small and highly variable. This issue is further amplified when losses are tracked per noise level, which increases variance in the maximizer selection. To stabilize training and avoid noisy oscillations in the worst-group choice, an exponential moving average (EMA) is used to smooth group-wise losses over iterations. Algorithm~\ref{alg:minmax-ema} describes this EMA-based Min--Max strategy, which is applied both to diffusion training and to LLM fine-tuning.

\begin{algorithm}[H]
    \caption{Min--max training with EMA of group-wise losses}
    \label{alg:minmax-ema}
    \begin{algorithmic}[1]
        \STATE \textbf{Initialize} model parameters $\theta$ of $Q_{\theta}$
        \STATE \textbf{Initialize} moving averages $\{L_a\}_{a\in\mathcal{A}}$ (e.g., $L_a \gets 0$)
        \FOR{each training iteration}
        \STATE Sample a minibatch $\mathcal{B}$ with group labels $a \in \mathcal{A}$
        \STATE Compute per-group losses $\{\ell_a\}_{a\in\mathcal{A}}$ on $\mathcal{B}$
        \FORALL{$a \in \mathcal{A}$}
        \STATE $L_a \leftarrow \alpha L_a + (1-\alpha)\,\ell_a$
        \ENDFOR
        \STATE $a^{\star} \leftarrow \arg\max_{a \in \mathcal{A}} L_a$
        \STATE Compute gradient $g \leftarrow \nabla_{\theta}\,\ell_{a^{\star}}$
        \STATE Update parameters $\theta \leftarrow \textsc{OptimizerStep}(\theta, g)$
        \ENDFOR
    \end{algorithmic}
\end{algorithm}

\paragraph{Training details and loss behavior.} All diffusion models are trained for 1M iterations with batch size 256 using Adam at learning rate $10^{-4}$. For Min--Max training, an EMA with decay $\alpha=0.9$ tracks group-wise losses (and, for diffusion, per-noise losses) to stabilize the maximization step. Training is performed on 8 NVIDIA V100 GPUs (32GB) and takes roughly 5 hours per model. Table~\ref{tab:train_loss_uncond_cond} reports normalized training losses averaged over noise levels and groups (lowest value set to 1). Conditional training typically flattens the loss landscape across groups and noise levels, while Min--Max and reweighting also reduce loss disparities, though the effect size varies across VP/VE.

\begin{table}[H]
    \footnotesize
    \setlength{\tabcolsep}{3pt}
    \caption{Training loss (MSE reconstruction error) for different diffusion models on FFHQ dataset (conditional and unconditional). The values are averaged over all noise levels and classes. The loss is normalized with respect to the class with the lowest loss.}
    \centering
    \input{tables/ffhq_merged_all_methods.tex}
    \label{tab:train_loss_uncond_cond}
\end{table}

In Figure~\ref{app:fig:losses_ffhq_ve}, we plot the estimated loss for the different methods across noise levels for EDM-VE models. We observe that baseline models already have relatively balanced losses (compared to EDM-VP models in Figure~\ref{fig:ffhq_losses_comparison} in the main text).

\begin{figure}[h]
    \centering
    \includegraphics[width=0.95\textwidth]{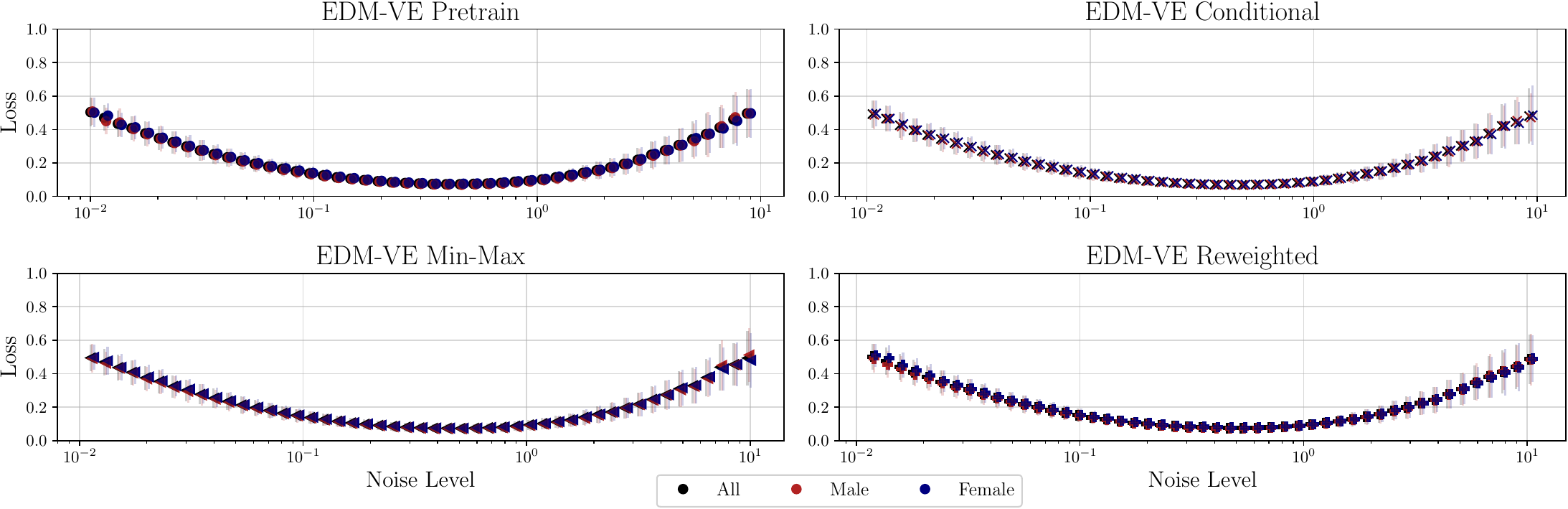}
    \caption{Comparison of the estimated loss for the baseline, Min-Max, conditional and reweighted methods on the FFHQ dataset with the VE model. The loss is plotted for every noise level.}
    \label{app:fig:losses_ffhq_ve}
\end{figure}

\subsubsection{Evaluating Diffusion Models}
\label{app:sec:experiments:diff:eval}
\label{app:sec:experiments:diffusion:eval}

We evaluate fidelity and diversity using the Precision/Recall (P\&R) framework of \citet{kynkaanniemi_improved_2019}, together with the more robust Topological Precision \& Recall (TopP\&R) procedure of \citet{kim_toppr_2023}. Precision measures the fraction of generated features lying in the estimated support of real features (fidelity), while recall measures the fraction of real features covered by the support of generated features (diversity/coverage). TopP\&R improves support estimation by retaining only statistically and topologically significant structures, making P\&R estimates more stable.

To detect subtle distinctions in FFHQ, features are extracted with a DINOv2 ViT \citep{oquab_dinov2_2024} fine-tuned on FFHQ. The same fine-tuned DINOv2 is used as an oracle to assign sensitive-attribute labels (male/female) to generated images. For computational stability, features are projected using random Gaussian projections before running TopP\&R. The whole procedure is repeated \textbf{10} times with independent projections; we report means (and standard deviations when relevant). Unless stated otherwise, evaluation uses \textbf{20k generated samples per class} with a matched number of real samples per class. FID is also reported for completeness, but fairness conclusions rely on subgroup TopP\&R precision/recall and their sum.

\paragraph{Results.} We present here the full tables of results for all diffusion models trained on FFHQ dataset. Table~\ref{app:tab:metrics_edmvp} report the metrics for every models. The results are discussed in Section~\ref{sec:improved} in the main text.
\begin{table}[H]
    \small
    \centering
    \setlength{\tabcolsep}{3pt}
    \caption{Evaluation metrics for different Unconditional EDM-VP models on FFHQ dataset.}
    \label{app:tab:metrics_edmvp}
    \input{tables/ffhq_vpve_appendix.tex}
\end{table}

\subsection{Large Language Models}
\label{app:sec:experiments:llm}

\subsubsection{Training Large Language Models}
\label{app:sec:experiments:llm:train}

We fine-tune the chat variants of LLaMA~3.2 at two scales (\textbf{1B} and \textbf{3B} parameters) using parameter-efficient adapters (LoRA). Intuitively, LoRA inserts small trainable matrices into the attention/MLP layers so that we only update a tiny fraction of weights, which is more memory- and time-efficient than full fine-tuning.

\paragraph{Prompting and conditioning.} Fine-tuning uses a two-message chat template with a system instruction specifying a Wikipedia-style tone and a user query requesting a short biography summary. The model is trained to produce the response:
\begin{verbatim}
SYSTEM_PROMPT = """
You are a helpful assistant that writes Wikipedia-style biography 
summaries of notable individuals. 
Your writing should follow the tone, style, and structure of 
Wikipedia "Summary" sections:
- Neutral, encyclopedic tone
- No personal opinions or promotional language
- Concise but informative, focusing on key life events, career, 
and achievements
- Chronological ordering of major facts
- Use proper sentences, not bullet points
- The biography text should start with "Biography:" followed 
directly by the text in the next line.
Do not invent references or citations. Do not include external 
commentary about the writing task.
"""

CHAT_ZERO_SHOT = (
    'Write the "Summary" section of a Wikipedia article about 
    a person of your choice.'
)

messages = [
    {"role": "system", "content": SYSTEM_PROMPT},
    {"role": "user",   "content": CHAT_ZERO_SHOT},
]
\end{verbatim}

To obtain \emph{conditional} generations, the same template is used while appending a short attribute constraint (here, gender) directly in the user message:
\begin{verbatim}
CHAT_ZERO_SHOT = (
    'Write the "Summary" section of a Wikipedia article about 
    a person of your choice.'
)
if gender is not None:
    user_prompt = CHAT_ZERO_SHOT + f" It should be a biography 
    of a {gender} person."
messages = [
    {"role": "system", "content": SYSTEM_PROMPT},
    {"role": "user",   "content": user_prompt},
]
\end{verbatim}

This prompt-based conditioning makes it straightforward to control attribute proportions at inference time (e.g., 50/50 or 75/25), and directly matches the EGO/MGO controls discussed earlier.

\paragraph{Training details.} We fine-tune the models for \textbf{6 epochs} with a batch size of \textbf{64} and a learning rate of \textbf{1e-4} using the AdamW optimizer. We use a LoRA rank of \textbf{8}. Training is performed on a single NVIDIA H100 GPU with 80GB of memory and takes approximately \textbf{2 hours} for the 1B model and \textbf{5 hours} for the 3B model.

\subsubsection{Evaluating Large Language Models}
\label{app:sec:experiments:llm:eval}

We evaluate text generations with the same P\&R formalism and TopP\&R support estimation used for images, but in a \emph{text-embedding space}. Concretely, we embed both real (Wikipedia) and generated biographies with a strong sentence-embedding model (we use a 4B-parameter open-source text-embedding model; reference omitted for anonymity). We then apply a random Gaussian projection to reduce dimensionality and run TopP\&R to estimate topological supports and compute precision/recall. This process is repeated \textbf{10} times with independent projections; we report the mean (and, when relevant, the standard deviation).

\paragraph{Oracle for sensitive attribute.} For LLM outputs, we assign gender labels using a simple pronoun-based heuristic: we scan each biography and count occurrences of gendered terms (e.g., \emph{he/him/his} vs. \emph{she/her/hers}). The majority class is used as the predicted label; ties or ambiguous cases are left unassigned. While noisy, this heuristic works well on short Wikipedia-style summaries and avoids extra training.

\paragraph{Notes.} As with images, we compute group-wise P\&R (and P+R) under TopP\&R on embeddings to quantify fidelity and coverage per sensitive group. We also report FID-like global metrics for completeness, but our fairness conclusions rely on group-wise TopP\&R.

\begin{table}[H]
    \centering
    \small
    \caption{Evaluation metrics for different LLMs on Wikipedia Bio dataset for the different methods. We plot the results for LLaMA-1B and LLaMA-3B models, as well as for Gemma-4B model pretrained and instruct-tuned.}
    \input{tables/llms_app.tex}
    \label{tab:train_eval_llm}
\end{table}
\label{app:tab:train_loss_llm}

\subsubsection{Wikipedia Biographies Dataset}
\label{app:sec:experiments:llm:wiki}

The Wikipedia Biographies dataset is derived from the Wikipedia Biographies dataset introduced by \cite{bronnec_exploring_2024}, which contains header of biographies of individuals. Following, we present examples of biographies generated by the different models.

\paragraph{Unconditional Generation:}
\input{tables/examples_by_model_tabularx.tex}

\paragraph{Conditional Generation:}
\input{tables/examples_by_model_tabularx_cond.tex}

%% file: tables/ffhq_merged_all_methods.tex
\begin{tabular}{l l ccc}
\toprule
Model & Method & All & Male & Female \\
\midrule
\multirow{4}{*}{EDM-VP} & Pretrain & 2.29$\pm$0.80 & 3.88$\pm$1.80 & 1.00$\pm$0.00 \\
 & Conditional & 1.02$\pm$0.02 & 1.00$\pm$0.00 & 1.03$\pm$0.03 \\
 & Reweighted & 1.03$\pm$0.03 & 1.00$\pm$0.00 & 1.05$\pm$0.05 \\
 & Min-Max & 1.04$\pm$0.04 & 1.00$\pm$0.00 & 1.07$\pm$0.08 \\
\midrule
\multirow{4}{*}{EDM-VE} & Pretrain & 1.02$\pm$0.02 & 1.00$\pm$0.00 & 1.04$\pm$0.03 \\
 & Conditional & 1.02$\pm$0.01 & 1.00$\pm$0.00 & 1.03$\pm$0.02 \\
 & Reweighted & 1.02$\pm$0.02 & 1.00$\pm$0.00 & 1.04$\pm$0.03 \\
 & Min-Max & 1.02$\pm$0.01 & 1.00$\pm$0.00 & 1.03$\pm$0.02 \\
\bottomrule
\end{tabular}

%% file: tables/ffhq_vpve_appendix.tex
\begin{tabular}{l l l |c c c c| c c c c |c c c c |c}
\toprule
& & & \multicolumn{4}{c}{Precision} & \multicolumn{4}{c}{Recall} & \multicolumn{4}{c}{FID} & $\delta$-PR \\
Model & Gen & Method & All & Male & Female & $\delta$ & All & Male & Female & $\delta$ & All & Male & Female & $\delta$  &   \\
\midrule
VP & MGO & Pretrained & 95.2 & 95.2 & 97.2 & 2.0 & 84.3 & 91.7 & 87.5 & 4.2 & \underline{2.33} & \underline{3.12} & \underline{2.78} & 0.34 & 2.2 \\
VP & MGO & Conditional & \textbf{98.0} & \textbf{96.8} & \textbf{98.4} & \textbf{1.6} & \underline{80.2} & 91.5 & \underline{84.7} & 6.8 & 2.75 & 3.41 & 3.32 & \underline{\textbf{0.08}} & 5.2 \\
VP & MGO & Reweighted & 94.5 & \underline{94.9} & \underline{96.7} & \textbf{1.8} & \textbf{87.8} & \textbf{92.2} & \textbf{89.8} & \textbf{2.3} & 2.44 & 3.28 & 2.84 & 0.44 & \textbf{0.6} \\
VP & MGO & Min-Max & \underline{89.4} & \textbf{96.8} & 96.7 & \underline{\textbf{0.1}} & \textbf{91.2} & \underline{86.6} & 86.4 & \underline{\textbf{0.2}} & 2.75 & 3.51 & 3.27 & \textbf{0.24} & \underline{\textbf{0.3}} \\
\midrule
VP & EGO & Pretrained & 96.1 & \underline{95.6} & 97.8 & 2.2 & \underline{83.6} & 91.7 & 87.2 & 4.5 & \underline{2.50} & \underline{2.99} & \underline{2.78} & 0.21 & 2.3 \\
VP & EGO & Conditional & 95.9 & \textbf{96.4} & \textbf{98.2} & \textbf{1.7} & \textbf{83.6} & \textbf{92.3} & \underline{84.6} & 7.7 & 2.85 & 3.36 & 3.37 & \underline{\textbf{0.00}} & 6.0 \\
VP & EGO & Reweighted & 93.6 & \textbf{96.1} & \underline{96.4} & \textbf{0.3} & \textbf{86.9} & 90.8 & \textbf{89.9} & \textbf{1.0} & 2.62 & 3.25 & 2.89 & 0.36 & \textbf{0.6} \\
VP & EGO & Min-Max & \underline{92.7} & \textbf{96.7} & 97.0 & \underline{\textbf{0.3}} & \textbf{88.3} & \underline{86.7} & 85.8 & \underline{\textbf{0.8}} & 2.96 & 3.35 & 3.27 & \textbf{0.08} & \underline{\textbf{0.5}} \\
\midrule
VE & MGO & Pretrained & 97.9 & 95.9 & 98.0 & 2.1 & \underline{81.4} & 87.8 & \underline{83.4} & 4.4 & \underline{2.49} & 3.27 & \underline{2.92} & 0.35 & 2.3 \\
VE & MGO & Conditional & 95.6 & \textbf{98.2} & 97.4 & \textbf{0.9} & \textbf{86.4} & \underline{87.1} & \textbf{84.4} & \textbf{2.7} & 2.88 & 3.54 & 3.43 & \textbf{0.11} & 3.5 \\
VE & MGO & Reweighted & 90.0 & \underline{94.6} & \underline{95.3} & \textbf{0.8} & \textbf{90.5} & \textbf{90.2} & \textbf{87.2} & \textbf{3.0} & 2.55 & \underline{\textbf{3.22}} & 3.17 & \underline{\textbf{0.04}} & \textbf{2.2} \\
VE & MGO & Min-Max & \underline{89.2} & \textbf{96.9} & 97.2 & \underline{\textbf{0.3}} & \textbf{91.4} & \textbf{89.5} & \textbf{88.4} & \underline{\textbf{1.1}} & 2.81 & 3.67 & 3.34 & \textbf{0.33} & \underline{\textbf{0.8}} \\
\midrule
VE & EGO & Pretrained & 96.4 & 97.2 & 97.6 & 0.4 & 81.6 & \underline{85.8} & 83.6 & 2.2 & \underline{2.65} & \underline{3.15} & \underline{2.92} & 0.23 & 1.8 \\
VE & EGO & Conditional & \textbf{97.5} & 95.5 & \textbf{98.7} & 3.2 & \underline{80.7} & \textbf{90.9} & \underline{81.2} & 9.7 & 3.00 & 3.50 & 3.47 & \underline{\textbf{0.03}} & 6.5 \\
VE & EGO & Reweighted & 93.2 & \underline{94.7} & \underline{94.7} & \underline{\textbf{0.0}} & \textbf{88.7} & \textbf{89.8} & \textbf{90.4} & \underline{\textbf{0.5}} & 2.77 & 3.20 & 3.25 & \textbf{0.05} & \textbf{0.5} \\
VE & EGO & Min-Max & \underline{87.2} & \textbf{98.1} & 96.1 & 2.0 & \textbf{91.4} & \textbf{87.8} & \textbf{90.1} & 2.3 & 3.08 & 3.53 & 3.34 & \textbf{0.19} & \underline{\textbf{0.3}} \\
\bottomrule
\end{tabular}

%% file: tables/llms_app.tex
\begin{tabular}{l l cccc cccc c}
\toprule
& & \multicolumn{4}{c}{Precision} & \multicolumn{4}{c}{Recall} & $\delta$PR \\
Model & Method & All & Male & Female & $\delta$ & All & Male & Female & $\delta$ &  \\
\midrule
\multirow{4}{*}{LLaMA-3.2-Chat 1B} & Pretrained & 50.45 & \underline{55.68} & \underline{76.86} & 21.18 & 86.08 & 77.03 & 65.57 & 11.46 & 32.65 \\
 & Conditional & 45.85 & 55.68 & 74.76 & \underline{\textbf{19.09}} & \underline{\textbf{86.29}} & 74.18 & \textbf{66.36} & \textbf{7.81} & \underline{\textbf{26.90}} \\
 & Reweighted & \textbf{50.51} & 55.43 & 76.72 & 21.29 & 85.29 & 76.12 & \textbf{66.91} & \textbf{9.22} & \textbf{30.50} \\
 & MinMax & \underline{\textbf{50.96}} & 54.79 & 74.34 & \textbf{19.55} & 84.60 & \underline{\textbf{77.11}} & \underline{\textbf{69.39}} & \underline{\textbf{7.73}} & \textbf{27.28} \\
\midrule
\multirow{4}{*}{LLaMA-3.2-Chat 3B} & Pretrained & \underline{51.77} & 53.06 & \underline{78.43} & 25.37 & 91.37 & \underline{87.40} & 72.34 & 15.06 & 40.43 \\
 & Conditional & 46.48 & \textbf{53.14} & 77.44 & \textbf{24.31} & \underline{\textbf{93.67}} & 85.38 & \underline{\textbf{73.11}} & \underline{\textbf{12.26}} & \textbf{36.57} \\
 & Reweighted & 50.16 & \textbf{54.75} & 77.63 & \textbf{22.88} & \textbf{91.88} & 84.94 & \textbf{72.65} & \textbf{12.29} & \textbf{35.16} \\
 & MinMax & 49.45 & \underline{\textbf{54.78}} & 77.05 & \underline{\textbf{22.27}} & \textbf{91.86} & 85.76 & \textbf{72.89} & \textbf{12.87} & \underline{\textbf{35.13}} \\
\midrule
\multirow{4}{*}{Gemma-3 4B (pt)} & Pretrained & 63.98 & 66.22 & 87.12 & 20.90 & 96.71 & 97.03 & 84.90 & 12.13 & 33.03 \\
 & Conditional & 57.50 & 65.46 & \underline{\textbf{88.64}} & 23.17 & \underline{\textbf{98.71}} & \underline{\textbf{97.39}} & \underline{\textbf{86.30}} & \underline{\textbf{11.08}} & 34.25 \\
 & Reweighted & 62.41 & \textbf{67.79} & \textbf{87.47} & \textbf{19.67} & \textbf{97.27} & 96.68 & \textbf{84.97} & \textbf{11.71} & \textbf{31.38} \\
 & MinMax & \underline{\textbf{64.24}} & \underline{\textbf{67.98}} & 87.11 & \underline{\textbf{19.13}} & \textbf{97.24} & \textbf{97.36} & \textbf{85.70} & \textbf{11.67} & \underline{\textbf{30.79}} \\
\midrule
\multirow{4}{*}{Gemma-3 4B (it)} & Pretrained & \underline{60.31} & 62.59 & 84.89 & 22.30 & 97.18 & \underline{96.78} & \underline{90.28} & 6.50 & 28.81 \\
 & Conditional & 55.58 & 61.68 & \underline{\textbf{85.85}} & 24.16 & \underline{\textbf{98.21}} & 96.72 & 88.31 & 8.41 & 32.57 \\
 & Reweighted & 59.83 & \underline{\textbf{64.55}} & \textbf{85.47} & \underline{\textbf{20.91}} & \textbf{97.52} & 96.07 & 89.87 & \underline{\textbf{6.20}} & \underline{\textbf{27.11}} \\
 & MinMax & 59.10 & \textbf{63.97} & \textbf{85.62} & \textbf{21.65} & \textbf{97.85} & 96.08 & 89.79 & \textbf{6.28} & \textbf{27.93} \\
\bottomrule
\end{tabular}

%% file: tables/examples_by_model_tabularx.tex
\begingroup
\setlength{\tabcolsep}{6pt}
\renewcommand{\arraystretch}{1.15}
\scriptsize
\begin{tabularx}{\textwidth}{p{0.05\textwidth} p{0.10\textwidth}  >{\ttfamily}X p{0.10\textwidth}}
 
\toprule
\textbf{Model} & \textbf{Method} & \textbf{Text} & \textbf{Gender} \\
\midrule
1b & Pretrained & Joseph Louis Casimir Czerniak (Polish: [jzef luiz ksnak]; 15 February 1898 14 March 1943) was a Polish politician who served as the 20th Prime Minister of Poland from 1925 to 1928 and again from 1935 to 1939. He was the leader of the Polish Socialist Party, a Polish political party founded in 1919, and served as its leader from 1924 to 1935. He was the first Polish Prime Minister to be elected to the Polish Sejm. He was assassinated in 1943. Czerniak was born in Warsaw. He studied at the Warsaw University, graduating in 1921. He was a member of the Polish Socialist Party and participated in the PolishSoviet War. In 1925, he was appointed Prime Minister after the resignation of Stanisaw Wojciech Mikoaj Gombrzyski. He was re-elected in 1928, but was dismissed by the Sejm in 1935. He was assassinated in 1943. & male \\
1b & Pretrained & Robert Charles Stevenson (29 June 1847 14 September 1927) was an English actor, comedian and comedian who appeared in numerous pantomimes and musical comedies, and was best known for his starring roles in the 1912 film "The Tragedy of Esther Waters" and the 1915 film "The Homecoming of Robin Hood". He was born in London and began his career as a professional actor in the West End, and later moved to the United States, where he appeared in a number of stage shows and films. He died in 1927. & male \\
1b & Min-Max & Joseph Louis Broussard (November 19, 1881 July 8, 1962) was a Major League Baseball pitcher who played for the St. Louis Cardinals, Philadelphia Athletics, New York Yankees, and Chicago Cubs. He was a member of the Cardinals' 1922 World Series championship team and was a member of the 1926 National League pennant winning team. He was the first pitcher to throw a no-hitter in the National League, and was the first pitcher to throw a complete game shutout in the National League, accomplishing the feat in 1928 while pitching for the Yankees. Born in El Paso, Texas, Broussard began his baseball career with the Tucson Toros of the Pacific Coast League. He was signed by the Cardinals in 1909 and made his major league debut in 1910. He was traded to the Athletics in 1912, where he was released after the 1914 season. He returned to the Cardinals in 1915 and pitched for the team until his retirement in 1925. Broussard was inducted into the National Baseball Hall of Fame in 1962. & male \\
1b & Min-Max & Robert "Bob" C. Henningsen (born June 7, 1935) is an American retired professional baseball player who played in Major League Baseball for 16 seasons as a catcher. He played his entire career with the St. Louis Cardinals, and is a member of the Cardinals' franchise's Hall of Fame. Henningsen was a key player on the Cardinals' 1962 World Series championship team and was a three-time All-Star. He was a member of the Cardinals' 1963 National League pennant team. He also won a World Series championship in 1964 with the Cardinals and was named the World Series MVP. & male \\
1b & RW & Joseph Louis Casimir Czerniak (Polish: [jzef luiz ksnak]; 15 February 1898 14 March 1943) was a Polish politician who served as the 20th Prime Minister of Poland from 1925 to 1928 and again from 1935 to 1939. He was the leader of the Polish Socialist Party, a Polish political party founded in 1919, and served as its leader from 1924 to 1935. After the PolishSoviet War, he became a key figure in the Polish government and was appointed Prime Minister in 1935. He was the first Polish government leader to be imprisoned by the Nazis. He died in 1943 while in custody. Czerniak was born in the city of Krakw. He studied law at the University of Krakw, where he became involved in the Polish Socialist Party. He was a member of the Polish government in exile from 1925 to 1928, and was a key figure in the Polish government of 1935. He was imprisoned by the Nazis after the invasion of Poland in 1939 and died in custody in 1943. & male \\
1b & RW & Robert Charles Stevenson (29 November 1847 14 September 1927) was an English actor, comedian and comedian who appeared in numerous pantomimes and musical comedies, and was best known for his starring roles in the 1912 film "The Tragedy of Esther Waters" and the 1915 film "The Homecoming of Robin Hood". He was born in London and educated at Harrow School and the Royal Academy of Dramatic Art. He began his acting career in London's West End, performing in a number of pantomimes and musical comedies, before moving to the United States, where he became a popular actor and comedian in the early 20th century. He appeared in a number of films, and toured the United States in the 1910s and 1920s, performing in vaudeville and revues. He died in 1927. & male \\
3b & Pretrained & Joseph "Joe" John "Bull" Connolly (March 2, 1895 November 7, 1969) was an American professional baseball player who played in Major League Baseball (MLB) for 17 seasons, all with the New York Yankees. He was a left-handed pitcher and a member of the Yankees' 1923 World Series championship team. Connolly was the Yankees' ace pitcher in the 1923 season, posting a 207 winloss record with a 2.61 earned run average (ERA). He was also a key member of the Yankees' 1927 team that won the World Series, and pitched in 18 games for the team. Connolly was a member of the Yankees' rotation in the early 1920s, but his career was marked by inconsistency and a series of injuries. He was a member of the Yankees' starting rotation for only 18 of his 17 seasons in the major leagues, and he was out of baseball for several years due to various injuries. He was released by the Yankees in 1930, but returned to the team in 1934. He played for the Yankees until 1938, when he retired. & male \\
3b & Pretrained & Charles Charles Victor Sargent (March 21, 1841 May 25, 1927) was an American landscape gardener, horticulturist, and botanist. He was a professor of horticulture at the Massachusetts Agricultural College (now the University of Massachusetts Amherst) for over 40 years and was the first president of the American Horticultural Society. He is best known for his work on the development of the Boston Public Garden and the Boston botanical garden at Harvard University. Sargent was born in Boston, Massachusetts, and grew up in a family of modest means. He was apprenticed to a nurseryman at the age of 15, and after completing his apprenticeship, he worked in several other gardens in the United States. In 1872, he was appointed as a professor of horticulture at the Massachusetts Agricultural College, and he remained there until his retirement in 1913. Sargent was a prolific writer, and his books on horticulture were widely read in the United States and abroad. He was also a prolific writer on botanical topics, and his works were published in numerous journals and magazines. & male \\
3b & Min-Max & Joseph "Jack" Gorman (born 10 November 1985) is a former English professional footballer who played as a striker for several clubs, including Wigan Athletic, Ipswich Town and Crawley Town. Gorman began his career at Wigan Athletic, signing for the club in 2005 from Manchester City. He made his debut for the club in 2005 and scored one goal in 14 appearances. He was released from the club at the end of the 200506 season and joined Ipswich Town in 2006. He spent two years with the club, scoring six goals in 51 appearances before moving to Crawley Town in 2008. He spent two seasons with the club, scoring 21 goals in 69 appearances before leaving the club in 2010. & male \\
3b & Min-Max & Charles William Williams (6 July 1831 7 January 1910) was an English-born Australian cricketer who played first-class cricket for the New South Wales cricket team. He was a right-handed batsman and occasional wicket-keeper. Born in England, Williams moved to Australia with his family when he was eight years old. He was educated at the Sydney Church of England Grammar School and began his cricketing career with the school team, before making his first-class debut for New South Wales in 1853. He played for the team for the next 12 years, and was a member of the 185960 team that toured England, where he also played for the Marylebone Cricket Club. He was known for his batting, particularly his ability to score big centuries. He played in the first ever Test match in 1877 and was also a member of the New South Wales team that toured England in 188283. He retired from first-class cricket in 1865, but continued to play in minor league cricket until his death in 1910. & male \\
3b & RW & Joseph "Joe" O'Connell (May 19, 1920 August 24, 1997) was an American politician who served as a member of the New Jersey General Assembly from the 2nd district from 1958 to 1964 and from 1967 to 1971. He was a Democrat. O'Connell was born in New York City and graduated from the New York City College of Pharmacy. He served in the United States Army during World War II and was awarded the Purple Heart. He graduated from Rutgers University and later earned a master's degree in education from Columbia University. He was a teacher and a pharmacist before entering politics. He was elected to the New Jersey General Assembly in 1957 and served until 1964. He was defeated for re-election in 1964, and did not seek election again until 1966. He served another term until 1971, when he was defeated in a Republican primary. & male \\
3b & RW & Charles Francis "Charlie" Jones (February 27, 1886 January 15, 1968) was an American baseball player and manager in Major League Baseball. He played in Major League Baseball from 1909 to 1925, playing for the Philadelphia Athletics, New York Giants, and Boston Braves. He also managed the St. Louis Browns from 1928 to 1929. Jones was a four-time All-Star and won the National League batting title in 1912. He is a member of the National Baseball Hall of Fame. He is the only player in Major League Baseball history to have won a batting title, a Gold Glove Award, and a World Series championship. & male \\
\bottomrule
\end{tabularx}
\endgroup

%% file: tables/examples_by_model_tabularx_cond.tex
\begingroup
\setlength{\tabcolsep}{6pt}
\renewcommand{\arraystretch}{1.15}
\scriptsize
\begin{tabularx}{\textwidth}{p{0.05\textwidth} p{0.10\textwidth}  >{\ttfamily}X p{0.10\textwidth}}
 
\toprule
\textbf{Model} & \textbf{Method} & \textbf{Text} & \textbf{Gender} \\
\midrule
1b & Pretrained & Vivian Joseph O'Neal (born April 21, 1987) is an American professional baseball pitcher who played in Major League Baseball (MLB) as a relief pitcher for the New York Yankees. A former high school standout, he played college baseball for the University of Michigan Wolverines. He was drafted by the Yankees with the 8th round pick of the 2006 MLB draft. O'Neal was named the Yankees' Minor League Pitching Prospect of the Year in 2008. He was named a Golden Glove winner for his defensive play at shortstop in 2011. He was traded to the Los Angeles Angels in 2012. O'Neal was released by the Angels in 2013 and signed with the Baltimore Orioles in 2014. He was traded to the Kansas City Royals in 2015. He was traded to the Texas Rangers in 2016. He was released by the Rangers in 2017 and signed with the New York Yankees in 2018. He was traded to the Miami Marlins in 2019. He was released by the Marlins in 2020 and signed with the Seattle Mariners in 2021. & male \\
1b & Pretrained & Lucius Julius Caesar (c. 52 BC 44 BC) was a Roman general and statesman who served as dictator of Rome in 49 BC. He was a member of the Julian clan and a member of the Julian clan, a patrician family that played a significant role in Roman politics. Caesar was the son of Gaius Julius Caesar and Cornelia, a member of the Julian clan. He was appointed as a quaestor in 43 BC, and served in the Roman Senate in 42 BC, but was expelled from the Senate in 41 BC after he was accused of treason and exiled to the island of Corsica. He returned to Rome in 40 BC, and served as a quaestor in 39 BC. He was appointed dictator of Rome in 49 BC, and was elected dictator for the fourth time in 44 BC, becoming the first Roman dictator since 49 BC. He was assassinated in 44 BC. Caesar's reign was marked by a series of military victories against the Gauls, and he was celebrated for his military campaigns, which included the conquest of the Roman province of Hispania, the conquest of the island of Sicily, and the expulsion of the Parthian Empire from the Roman province of Cappadocia. He also reformed the Roman calendar, introducing the Julian calendar, and reformed the Roman Senate, establishing the office of the praetor maximus. & male \\
1b & Min-Max & Vivian James (ne Wilson; 4 October 1919 28 December 1994) was an English professional golfer who competed in the 1947 British Open and the 1951 US Open. She was the first woman to win a major championship. She was also the first woman to be ranked number one in the world golf rankings for a period of two years. She was a member of the first class of golfers to be inducted into the World Golf Hall of Fame. & female \\
1b & Min-Max & Lucius Marcellus Valerius Rufus (fl. 1st century BC) was a Roman senator and statesman from the gens Rufi, a patrician family that included several prominent politicians and military leaders in the Roman Republic. He was a member of the first century BC, and was a member of the Curia of the Roman Senate from 41 BC until his death in 1 BC. Rufus was known for his political and military career in the late Republic, during which he served as a general under the dictator Julius Caesar. He was appointed consul in 41 BC and served as governor of the province of Asia in 39 BC. He died in 1 BC. & male \\
1b & RW & Vivian Vess, ne (born 1956) is a Canadian actress, best known for her roles in the TV series The Shield and its spin-off series Alphas. She was born in New York City and began her career in television and film in the early 1980s. She began as a child actress on the soap opera The Young and the Restless and later appeared on the television series Perfect Strangers, The Love Boat, and The Golden Girls. She gained recognition for her roles in the television series The Shield and its spin-off series Alphas, for which she won a Canadian Screen Award for Best Supporting Actress in a Drama Series. & female \\
1b & RW & Lucius Quinctius Cincinnatus (c. 180c. 90 BCE) was a Roman statesman and military leader of the late Roman Republic. He was a member of the noble family of the Cincii, and was a prominent figure in the late Roman Republic, serving as a censor, quaestor, and tribune of the plebs. Cincinnatus was born in 180 BCE. He was a veteran of the Second Punic War, and served as a quaestor in the late 2nd century BCE. He was elected as a censors in 247 BCE, and then as a tribune of the plebs in 229 BCE. He served as censor again in 219 BCE, and was elected as a consul in 216 BCE. He was appointed as dictator in 214 BCE, and then as dictator again in 211 BCE. He was assassinated in 211 BCE, and succeeded by his nephew, Lucius Quinctius Calvus. & male \\
3b & Pretrained & Joseph "Joe" O'Connell (May 19, 1920 August 4, 1997) was an American politician who served as a member of the New Jersey General Assembly from the 9th district from 1958 to 1964 and from 1967 to 1971. He was a Democrat. O'Connell was born in New York City and graduated from the New York City High School of Commerce. He served in the United States Army during World War II and the Korean War, and worked as a laborer and a truck driver. He was elected to the General Assembly in 1957 and served until 1964. He was again elected in 1966 and served until 1971. & male \\
3b & Pretrained & Robert Charles "Bob" Jones III (born March 30, 1940) is an American businessman, politician, and Christian minister. He is best known for founding Liberty University, a Christian university in Lynchburg, Virginia, and for his advocacy of Christian nationalism and conservative politics. He was the chairman of the Board of Directors of the Family Research Council, a conservative Christian advocacy group, from 1982 until 2012. He was also the founder and chairman of the World Relief and Development Council, a Christian charity. Jones founded Liberty University in 1971 as a small Bible college in Lynchburg, Virginia, and expanded it into a large university by the 1980s. In the 1980s, he was a prominent figure in the Christian Right, and he ran for the Republican presidential nomination in 1988. He has been a vocal critic of the separation of church and state and has advocated for the teaching of creationism in public schools. He has been described as a "Christian fundamentalist" and a "evangelical". & male \\
3b & Min-Max & Joseph "Joe" Jones (February 28, 1897 December 23, 1971) was an American professional baseball player who played for the Philadelphia Athletics, Chicago Cubs, and Brooklyn Dodgers of Major League Baseball (MLB). He played as a pitcher and outfielder. Jones was born in New York City and attended George Washington University, where he played college baseball for the George Washington Pioneers. After college, he was signed by the Athletics and made his MLB debut in 1918. He played for the Athletics for eight seasons, including three All-Star appearances, and was a member of the 1927 World Series team. He was traded to the Cubs in 1929, where he played for two seasons, and was then traded to the Dodgers in 1931. He played for the Dodgers for four seasons before his MLB career was cut short by a series of injuries. & male \\
3b & Min-Max & Robert James "Bobby" Jones (23 July 1890 2 September 1971) was an English professional footballer who played as a centre-half. He made over 200 appearances for the first team of Southampton Football Club, and also played for the England national team. Born in Southampton, Jones began his career with local side East End United before joining Southampton in 1908. He made his first-team debut in 1910 and became a regular player for the club, helping them win the Southern League title in 191213. He also played for the England national team, earning 14 caps, and was part of the team that won the 1920 Summer Olympics. Jones moved to Fulham in 1914, but returned to Southampton in 1919, playing until his retirement in 1925. He later managed the club's youth team and was appointed chairman of the club in 1954. He was knighted in 1958 for his services to football. & male \\
3b & RW & Joseph "Joe" Lawler (born 2 January 1968) is an English former professional footballer who played as a midfielder for several clubs including Manchester City, Ipswich Town and Sunderland. He also had a brief spell at Middlesbrough. Lawler began his career with his hometown club Manchester City, before moving to Ipswich Town in 1990. He won the First Division title in his first season at the club, and played in the 1992 FA Cup final. He moved to Sunderland in 1993, where he won the First Division title again, and played in the 1994 FA Cup final. He moved to Middlesbrough in 1995, but left after just one season. He had a brief spell at Norwich City in 1997 before retiring. & male \\
3b & RW & Robert Francis "Bob" Johnson (January 23, 1894 August 7, 1971) was an American professional baseball player. He played in Major League Baseball (MLB) as a pitcher for the St. Louis Browns from 1913 to 1918. He was a left-handed thrower and batted and threw right-handed. Johnson was a member of the 1915 World Series championship team. He is best known for throwing a no-hitter in Game 1 of the 1915 World Series, and for his 1916 season in which he won 13 games and lost just two, and was named the American League leader in shutouts with 11. Johnson was also the American League leader in shutouts with 11 in 1917, and he was a member of the St. Louis Browns' 1918 World Series championship team. Johnson's career was cut short by an injury, as he suffered a shoulder injury in 1918 and was forced to retire from baseball. He later worked as a baseball scout for the Chicago Cubs and was also involved in the promotion of the St. Louis Browns. & male \\
\bottomrule
\end{tabularx}
\endgroup